\documentclass[sigconf]{acmart}
\AtBeginDocument{%
  }

\copyrightyear{2026}
\acmYear{2026}
\setcopyright{cc}
\setcctype{by}
\acmConference[WWW '26] {Proceedings of the ACM Web Conference 2026}{April 13--17, 2026}{Dubai, United Arab Emirates.}
\acmBooktitle{Proceedings of the ACM Web Conference 2026 (WWW '26), April 13--17, 2026, Dubai, United Arab Emirates}
\acmISBN{979-8-4007-2307-0/2026/04}
\acmDOI{10.1145/3774904.3792295}

\usepackage{tikz}
\usepackage{xcolor}
\usepackage{amsmath,amsfonts,amsthm}
\usepackage[ruled,vlined,linesnumbered]{algorithm2e}
\usepackage{algorithmic}

\SetKwComment{Comment}{/* }{ */}

\usepackage{wrapfig}
\usepackage{subcaption}  
\usepackage{adjustbox}    
\usepackage{booktabs}
\usepackage{tabulary}
\usepackage{multirow}
\usepackage{enumitem}
\usepackage{graphicx}
\usepackage{appendix}
\usepackage{rotating}

\usepackage{balance} 

\usepackage[font=small,skip=0pt]{caption}
\usepackage{tabularx}
\usepackage[table]{xcolor}
\microtypecontext{spacing=nonfrench}
\usepackage[normalem]{ulem}
\usepackage{cleveref}

\newtheorem{definition}{Definition}
\newtheorem{assumption}{Assumption}
\newtheorem{theorem}{Theorem}

\begin{document}
\title{WinFLoRA: Incentivizing Client-Adaptive Aggregation in Federated LoRA under Privacy Heterogeneity}

\author{Mengsha Kou}
\affiliation{%
  \institution{RMIT University}
  \city{Melbourne}
  \country{Australia}}
\email{mengsha.kou@student.rmit.edu.au}

\author{Xiaoyu Xia}
\authornote{Corresponding author.}
\affiliation{%
  \institution{RMIT University}
  \city{Melbourne}
  \country{Australia}}
\email{xiaoyu.xia@rmit.edu.au}

\author{Ziqi Wang}
\affiliation{%
  \institution{RMIT University}
  \city{Melbourne}
  \country{Australia}}
\email{ziqi.wang6@student.rmit.edu.au}

\author{Ibrahim Khalil}
\affiliation{%
  \institution{RMIT University}
  \city{Melbourne}
  \country{Australia}}
\email{Ibrahim.khalil@rmit.edu.au}

\author{Runkun Luo}
\affiliation{%
  \institution{Huazhong University of Science and Technology}
  \city{Wuhan}
  \country{China}}
\email{rkluo@hust.edu.cn}

\author{Jingwen Zhou}
\affiliation{%
  \institution{CSIRO’s Data61}
  \city{Melbourne}
  \country{Australia}}
\email{helen.zhou@data61.csiro.au}

\author{Minhui Xue}
\affiliation{%
  \institution{CSIRO’s Data61 and Responsible AI Research (RAIR) Centre}
  \city{Adelaide}
  \country{Australia}}
\email{jason.xue@data61.csiro.au}

\renewcommand{\shortauthors}{Mengsha Kou et al.}

\begin{abstract}
Large Language Models (LLMs) increasingly underpin intelligent web applications, from chatbots to search and recommendation, where efficient specialization is essential. Low-Rank Adaptation (LoRA) enables such adaptation with minimal overhead, while federated LoRA allows web service providers to fine-tune shared models without data sharing. However, in privacy-sensitive deployments, clients inject varying levels of differential privacy (DP) noise, creating privacy heterogeneity that misaligns individual incentives and global performance. 
In this paper, we propose WinFLoRA, a privacy-heterogeneous federated LoRA that utilizes aggregation weights as incentives with noise awareness. Specifically, the noises from clients are estimated based on the uploaded LoRA adapters.
A larger weight indicates greater influence on the global model and better downstream task performance, rewarding lower-noise contributions.
By up-weighting low-noise updates, WinFLoRA improves global accuracy while accommodating clients' heterogeneous privacy requirements. 
Consequently, WinFLoRA aligns heterogeneous client utility in terms of privacy and downstream performance with global model objectives without third-party involvement. Extensive evaluations demonstrate that across multiple LLMs and datasets, WinFLoRA achieves up to 52.58\% higher global accuracy and up to $2.56\times$ client utility than state-of-the-art benchmarks. Source code is publicly available at \url{https://github.com/koums24/WinFLoRA.git}.
\end{abstract}

\begin{CCSXML}
<ccs2012>
   <concept>
       <concept_id>10002978.10003006.10003013</concept_id>
       <concept_desc>Security and privacy~Distributed systems security</concept_desc>
       <concept_significance>500</concept_significance>
       </concept>
 </ccs2012>
\end{CCSXML}

\ccsdesc[500]{Security and privacy~Distributed systems security}

\keywords{Federated fine-tuning; privacy preservation; incentive mechanism.}
\maketitle

\section{Introduction}
Large language models (LLMs) demonstrate strong proficiency across a spectrum of downstream tasks~\cite{achiam2023gpt,team2023gemini,touvron2023llama}, and increasingly power web services such as chatbots, writing assistants, and search engines~\cite{wu2023bloomberggpt}. Fine-tuning is a practical way to specialize a pre-trained LLM for a downstream task~\cite{dodge2020fine}. However, full fine-tuning is computationally expensive. Parameter-efficient fine-tuning (PEFT) reduces this cost by freezing the backbone model and learning a small set of task-specific parameters~\cite{houlsby2019parameter,hu2022lora}. Among PEFT methods, Low-Rank Adaptation (LoRA) is widely adopted for web services due to low memory consumption and computation overheads, as it trains low-rank adapters by less than 5\% of the parameters while achieving comparable accuracy~\cite{hu2022lora}.

In data-siloed settings, web services can collaborate through federated fine-tuning (FFT) to jointly fine-tune a shared backbone without exposing their raw data to improve the model performance~\cite{babakniya2023slora,sun2024improving,kuo2024federated}.
However, this federated paradigm is still vulnerable to privacy leakage, such as membership inference~\cite{wen2024membership,xia2025edge} and data reconstruction~\cite{lukas2023analyzing,fu2023practical}, which are particularly concerning in sensitive domains such as telemedicine and fintech~\cite{wang2024gees,tang2024merit,wang2025MoSEEC}. 
Differential privacy (DP) provides a feasible defense by enabling clients to add calibrated Gaussian noise to local data or LoRA updates during training, thereby reducing privacy leakage~\cite{dwork2006calibrating,wang2025KGEES,wang2024balancing}. 

\begin{figure}[]
    \centering
    \includegraphics[width=\linewidth]{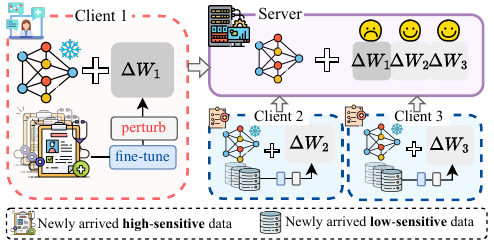}
    \caption{Clients apply different levels of noise based on data sensitivity, with higher-sensitivity clients adding stronger perturbations before uploading updates over time.}
    \label{fig:intuition}
\end{figure}

Consider an organization of telemedicine clients that collaboratively fine-tunes a shared LLM on their own clinical records to improve downstream performance, as illustrated in ~\Cref{fig:intuition}. As new clinical data arrive, the collaboration proceeds continuously, with clients fine-tuning their own LoRA adapters on the latest local data while injecting noise to meet their privacy requirements. In each round, these adapters are uploaded and aggregated to update the global model. Clients independently choose their perturbation strategy, with stronger privacy preferences resulting in larger noise. However, this noise degrades update quality and reduces global accuracy, creating a fundamental tension between preserving client-side privacy and maintaining overall model performance~\cite{birrell2024differentially}.

To mitigate the global performance degradation caused by self-interested behaviors, existing studies aim to incentivize clients to trade privacy for accuracy through external compensation such as monetary payments~\cite{mao2024game,wang2022privaim,hu2022incentive}. These approaches often overlook privacy heterogeneity and assume the presence of a trusted third party for incentive administration, an assumption that may not hold in decentralized or sensitive settings~\cite{zhang2021incentive}. Furthermore, pricing updates by Shapley values or data valuation often require repeated retraining, which becomes infeasible at LLM scale due to high computational cost~\cite{chen2024fdfl}. Thus, aligning individual behavior with the global performance objective remains a challenge under privacy heterogeneity.

In this paper, we propose \textbf{WinFLoRA}, a \underline{w}eight-based \underline{in}centive for \underline{f}ederated \underline{LoRA} under privacy heterogeneity. The server estimates each client’s noise directly from the uploaded LoRA adapters and assigns a weight based on the noise estimation. The key idea is to assign lower aggregation weights to LoRA adapters with higher levels of injected noise, thereby mitigating their negative impact on global model performance. In contrast, lower-noise updates receive greater weight, increasing their influence on aggregation and potentially improving their own downstream task performance. This acts as an incentive mechanism, rewarding higher-quality (i.e., less perturbed) contributions. Meanwhile, privacy-prioritizing clients can retain stronger noise with reduced aggregation weight, yet still benefit from a high-quality global model. This weighting mechanism ties each client’s reward to the quality of their update, aligning individual incentives with global performance. Moreover, WinFLoRA infers noise levels directly from LoRA adapters rather than relying on self-reported privacy preferences, requiring neither explicit payments nor a trusted third party. This enhances its practicality in real-world settings. The main contributions are summarized as follows:
\begin{itemize}[leftmargin=*,noitemsep]
    \item To the best of our knowledge, WinFLoRA is the first approach to significantly improve global model accuracy in federated LoRA while accommodating heterogeneous client privacy levels. It achieves this by aligning individual utility with the system objective through a noise-aware, weight-based incentive mechanism.
    \item We propose a noise-aware weight allocation strategy in WinFLoRA, mapping the injected noise scale to aggregation weights. By allocating more weights to high-quality updates as an incentive, the global performance is significantly improved.
    \item We formalize the interaction among clients induced by noise-aware weighting as a stochastic aggregative game of noise selection, prove the existence of a stationary Markov equilibrium, and empirically show the stability of client strategies.
    \item Extensive experiments are conducted across models, datasets, and system configurations. The results show that WinFLoRA improves global model performance by up to 52.58\%, and average client utility by up to 2.56$\times$, compared to benchmarks.
\end{itemize}

\section{Preliminaries and Related Work}
\subsection{Preliminaries}
\noindent\textbf{LoRA-based Federated fine-tuning (FFT). }
LoRA is a parameter-efficient fine-tuning (PEFT) method that adapts pre-trained models by adding a low-rank adapter to their frozen weights, avoiding updates to most original parameters~\cite{hu2022lora}. For a target module with pre-trained base model $\mathrm{W}\in\mathbb{R}^{d_{in}\times d_{out}}$, LoRA learns two low-rank matrices $\mathrm{B}\in\mathbb{R}^{d_{out}\times r}$ and $\mathrm{A}\in\mathbb{R}^{r\times d_{in}}$ with $r\ll\min(d_{in},d_{out})$, yielding:
\begin{equation}
\label{eq:lora_update}    
    \mathrm{W}' \,=\, \mathrm{W} + \Delta \mathrm{W} \,=\, \mathrm{W} + \mathrm{B}\mathrm{A}
\end{equation}
A common initialization is $\mathrm{A}\sim\mathcal{N}(0,\sigma^2)$ and $\mathrm{B}=0$~\cite{wang2024flora}. 

In the federated setting, 
client $c_i$ produces $\Delta\mathrm{W}_i=\mathrm{B}_i\mathrm{A}_i$ with $\mathrm{B}_i\in\mathbb{R}^{d_{out}\times r_i}$ and $\mathrm{A}_i\in\mathbb{R}^{r_i\times d_{in}}$, where the rank $r_i$ may vary across clients. The server aggregates updates via block stacking: it vertically stacks $\{\mathrm{B}_i\}_{i=1}^N$ to form
$\mathrm{B}_g=[\mathrm{B}_1;\mathrm{B}_2;\cdots;\mathrm{B}_N]\in\mathbb{R}^{d_{out}\times(\sum_{i=1}^N r_i)}$ and horizontally concatenates $\{\mathrm{A}_i\}_{i=1}^N$ to form
$\mathrm{A}_g=[\mathrm{A}_1,\mathrm{A}_2,\cdots,\mathrm{A}_N]\in\mathbb{R}^{(\sum_{i=1}^N r_i)\times d_{in}}$. According to~\cref{eq:lora_update},the global aggregation is:
\begin{equation}
\label{eq:stacking-based}
    \Delta \mathrm{W}_g \,=\, \mathrm{B}_g\mathrm{A}_g \,=\, \sum\nolimits_{i=1}^N \mathrm{B}_i\mathrm{A}_i
\end{equation}
The global model is updated by $\mathrm{W}'=\mathrm{W}+\Delta\mathrm{W}_g$.

\noindent\textbf{Differential privacy. }
Differential privacy provides a formal guarantee that the inclusion or exclusion of any single record has only a limited effect on the distribution of the output~\cite{dwork2006calibrating}. For a mechanism $M$ and two adjacent datasets $D,D'$ that differ in only a single record, $(\varepsilon,\delta)$-DP can be guaranteed when $M$ satisfies:
\begin{equation}
\Pr[M(D)\in \mathcal{O}] \le e^{\varepsilon}\,\Pr[M(D')\in \mathcal{O}] + \delta,
\end{equation}
where $\mathcal{O}$ denotes all possible subsets of outputs of $M$, $\varepsilon$ denotes a distinguishable bound between the output of $M(D)$ and $M(D')$, and $\delta$ represents the probability of information leakage. The Gaussian mechanism is a classical DP mechanism which adds zero-mean Gaussian noise with a certain variance $\sigma^2$ to an existing algorithm $f(\cdot)$ as $M(D)=f(D)+\xi$, where $\xi \sim \mathcal N(\sigma^2 I)$~\cite{wei2020federated,sun2024improving}. In this paper, DP noise is injected into LoRA adapters, i.e., ($A_i+\xi_{i,A}$,$B_i+\xi_{i,B}$), where $\xi_{A,i}\!\sim\!\mathcal{N}(0,\sigma_{A,i}^2 I)$ and$\xi_{B,i}\!\sim\!\mathcal{N}(0,\sigma_{B,i}^2 I)$, in FFT to mitigate the training-data leakage, which is widely used~\cite{wei2020federated}.

\subsection{Related Work}
\noindent\textbf{Parameter-efficient fine-tuning (PEFT). }
Full-parameter fine-tuning for LLMs imposes significant computational and storage overhead. For instance, even a 7B-parameter model requires 28GB of GPU for weights alone, while gradients and optimizer states typically add a further 92GB, bringing the total to 120GB~\cite{rajbhandari2020zero}. The high cost of full fine-tuning motivates PEFT, such as prompt tuning, prefix tuning, adapter tuning, and LoRA~\cite{ding2022delta}. 
Prompt tuning and prefix tuning learn a sequence of trainable prompts or task-specific vectors in the input space to steer the model’s behavior on downstream tasks while keeping model weights frozen~\cite{houlsby2019parameter,li2021prefix}.
Adapter tuning inserts lightweight trainable modules between Transformer layers and fine-tunes only these adapters~\cite{cai2023efficient}. 
LoRA decomposes weight updates into a pair of low-rank matrices injected in parallel with frozen weights without modifying backbone architectures~\cite{hu2022lora}. Several studies have demonstrated the better flexibility and superior performance of LoRA than adapter-based methods~\cite{babakniya2023slora,hu2023llm}. Thus, we adopt LoRA as the PEFT method in our work. 

\noindent\textbf{FFT with LLMs.}
Although fine-tuned LLMs often serve as backbones across application domains, effective fine-tuning still relies on large, domain-specific datasets. FFT offers a practical way to fine-tune across decentralized datasets held by multiple parties~\cite{babakniya2023slora}. FedIT is the first attempt to integrate LoRA-based PEFT into the FL framework using vanilla FedAvg~\cite{mcmahan2017communication}, but it exhibits limitations under heterogeneous LoRA configurations~\cite{zhang2024towards}. Subsequent approaches mitigate heterogeneity by zero-padding LoRA modules at the cost of additional computation and communication, yet still incur aggregation noise~\cite{chen2024rbla,gao2025federated}. Flora eliminates the noise by aggregating by matrix stacking to form the global low-rank update, while accommodating heterogeneous client-specific ranks~\cite{wang2024flora}. Other lines of work enhance LoRA’s utility in FL through sparse adaptation space design~\cite{kuo2024federated}, SVD-based mitigation of data heterogeneity~\cite{yan2024federa}, support for multimodal inputs~\cite{chen2024feddat}, and communication -efficient optimization~\cite{nguyen2024towards}. Despite the improvement in efficiency and generalization, FFT remains vulnerable to privacy threats, such as membership inference attacks and data reconstruction attacks. A common way to mitigate privacy risks is to incorporate differential privacy (DP) into FFT~\cite{xu2025dp,sun2024improving}. However, the added noises lead to model performance degradation, especially under privacy heterogeneity.

\noindent\textbf{Incentive mechanisms in FFT. }
Incentivizing participation with high-quality contributions is a longstanding challenge in FL. Existing studies fall into three categories, including contract-based, contribution-based, and reputation-based mechanisms. Contract-based mechanisms model server-client interaction as a Stackelberg or contract-theoretic game and set prices to generate effort under information asymmetry~\cite{zhang2021incentive,mao2024game,lin2023heterogeneous,yang2023csra}. In practice, such mechanisms rely on self-reported private information or effort indicators, which may compromise privacy and incentivize dishonest behavior, and often require explicit payments to clients. Contribution-based reward mechanisms estimate each client’s marginal value, typically via Shapley-value attribution, which is computationally expensive and scales poorly with LLMs~\cite{luo2023incentive,chen2024fairreward}. Reputation-based mechanisms prioritize clients based on the reliability of their past updates or resource availability, and are sometimes combined with blockchain-based auditing. However, such integration introduces significant operational complexity in ensuring truthful contribution valuation~\cite{zhang2021incentive,rashid2025trustworthy}. 

\section{Problem Formulation}
\subsection{System Model}
Considering the continual FFT over $T$ rounds with a set of clients $\mathcal{C}=\{c_i\}_{i=1}^N$. In round $t$, each client $c_i$ trains her low-rank adapters $(A_i^{t}, B_i^{t})$ on newly arrived data $D_i^{t}$, while keeping the pre-trained weights frozen. Before upload, $c_i$ optionally injects local privacy noise with scale $\sigma_i^{t}$ to her adapter update, based on her privacy preference. The server then executes one-shot aggregation of LoRA updates from all the clients in each round to generate the global model $W_g^{t}$~\cite{wang2024one}. As new data continue to arrive, this process iterates over time. The main notations and definitions are summarized in ~\Cref{tab:summary_of_notations} in \Cref{appendix:notations}.

\noindent\textbf{System Utility}.
The system aims to gain a high-accuracy global model from decentralized data.
In round $t$, the system utility is the performance of the aggregated global model, calculated with:
\begin{equation}
U_s^t \;=\; \mathcal{A}(W_g^t,\, \mathcal{D}^t;\boldsymbol{\sigma}^t),
\end{equation}
where $W_g^t$ denotes the global parameters obtained by aggregating all clients’ LoRA updates in round $t$, $\boldsymbol{\sigma}^t=\{\sigma_i^t\}_{i=1}^N$ is the client-side noise scales, and $\mathcal{D}^t$ is the global dataset across all clients. The mapping $\mathcal{A}(\cdot)$ denotes a performance metric, e.g., accuracy or negative loss, with larger values indicating better performance.
\begin{figure*}[]
    \centering
    \includegraphics[width=0.95\linewidth]{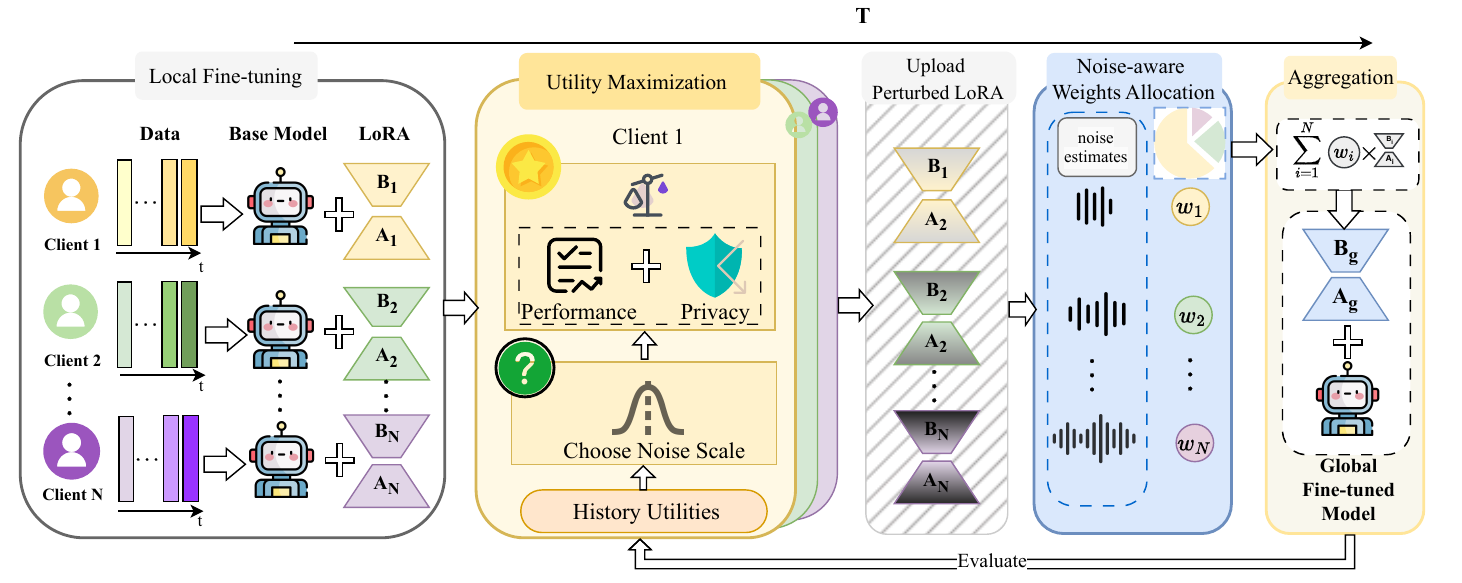}
    \caption{System overview of WinFLoRA. In each round, clients train their own LoRA with newly arrived local data, inject client-specific noise, and upload perturbed adapters. The server estimates per-client noise and assigns weights inversely, then aggregates the weighted updates to form a global fine-tuned model.}
    \label{fig:overview}
\end{figure*}

\noindent\textbf{Client Utility}. 
The client primarily aims to improve downstream task performance while also seeking a level of privacy protection based on her data sensitivity. Accordingly, client $c_i$'s per-round utility consists of downstream task performance and privacy benefit, calculated with:
\begin{equation}
\label{eq:client_utility}
 U_i^t \;=\; \mathcal{G}_i(W_g^t, \mathcal{D}_i^t;\boldsymbol{\sigma}^t)\;+\;\gamma_i\cdot \mathcal{P}_i(\sigma_i^t),
\end{equation}
where $\mathcal{G}_i(\cdot)$ measures the performance that client $c_i$ derives from the current global model $W_g^t$ on her local data $D_i^t$, given the joint noise profile $\boldsymbol{\sigma}^t$. The privacy term is normalized as
$\mathcal{P}_i(\sigma_i^t)=\sigma_i^t/\sigma_{\max}\in[0,1]$, increasing monotonically with the local noise scale $\sigma_i^t$. The coefficient $\gamma_i>0$ captures the preference for privacy of the client $c_i$, and a larger $\gamma_i$ implies a higher preference for privacy benefit relative to performance. This shows a trade-off that a larger $\sigma_i^t$ raises $\mathcal{P}_i(\cdot)$ but degrades the quality of LoRA updates, potentially reducing $\mathcal{G}_i(\cdot)$.

\subsection{Problem Statement}
In continual FFT, client $c_i$ selects a noise scale $\sigma_i^t\in[0,\sigma_{\max}]$ in each round $t$ to maximize her long-term average individual utility. Since the server aggregates heterogeneous updates, higher noise lowers the aggregate model quality and can reduce other clients’ rewards, creating a negative externality. When privacy preferences vary, clients tend to internalize their own privacy gains while externalizing the accuracy loss to others, often resulting in excessive noise and global performance degradation. To address this issue, we design an incentive mechanism $\mathcal{M}$ that aligns individually optimal noise choices with the long-term system objective in a Win-Win mode. We formulate the problem as:
\begin{align}
\max_{\mathcal M} \lim_{T\to\infty}\frac{1}{T}\sum_{t=1}^T U_s^t \label{eq:sys-avg-obj},\quad
\max_{\mathcal M}\lim_{T\to\infty}\frac{1}{T}\sum_{t=1}^T U_i^t, \forall c_i \in \mathcal C
\end{align}

\section{System Design}
\subsection{WinFLoRA Overview}
\Cref{fig:overview} illustrates the system overview of WinFLoRA. The server utilizes noise-aware weighting incentive (\Cref{subsec:NWA}) that maps per-client noise estimates to aggregation weights, involving clients with heterogeneous privacy preferences (\Cref{subsec:INA}) into a utility maximization game(\Cref{subsec:theory}).
Specifically, on the server side, the server estimates noise scale directly from uploaded adapters, computes aggregation weights that sum to one and are inversely proportional to the estimated noise, so cleaner updates receive larger weights and noisier updates receive smaller weights (\Cref{subsec:NWA}). Then, it aggregates the weighted updates $\{\Delta W_i^t\}_{i=1}^N$ with the assigned weights $\{w_i^t\}_{i=1}^N$ to obtain the global update $\Delta W_g^t$ and merges it into the base model to form the global fine-tuned model.

Given this incentive, clients act selfishly to maximize their own utility based on their performance–privacy preference(\Cref{subsec:INA}). In round $t$, each client $c_i$ performs local LoRA-based fine-tuning to produce $A_i$ and $B_i$, selects a noise scale $\sigma_i^t$ to perturb her LoRA updates by learning from history utility records. Since the global model aggregates all updates, the noise injected by a single client can impact the accuracy of other clients. Thus, each client’s reward is also impacted by the choices of others. With the continuous arrival of new data, clients’ interdependent decisions naturally constitute a repeated aggregative game of utility maximization.

\subsection{Noise-aware Weights Allocation (NWA)}
\label{subsec:NWA}
\noindent\textbf{Noise estimation.} To avoid eliciting privacy preferences, the server estimates each client's noise scale directly from their uploaded LoRA updates $\{\Delta W_i=B_iA_i\}_{i=1}^N$ using the leave-one-out principal component analysis (LOO-PCA), inspired by the leave-one-out comparison that decouples the dependence between cluster structure and noise in spectral clustering~\cite{zhang2024leave}. Concretely, we form a public subspace of all clients' updates and use the residual outside the subspace to obtain estimated noise scales $\{\hat{\sigma}_i\}_{i=1}^N$. The round index $t$ is omitted for brevity here. As illustrated in~\Cref{fig:loo-pca} and~\Cref{alg:loo_pca_b}, taking LoRA $B$ matrices as an example, for client $c_i$, the server forms a matrix $X^{(-i)}$ by stacking the flattened LoRA $B$ matrices from all clients except $c_i$ (lines~\ref{line:stack}--\ref{line:formX}). Then it computes the row-wise mean $\mu^{(-i)}$ of $X^{(-i)}$ and centers it as $X^{(-i)}\!\leftarrow\!X^{(-i)}-\mu^{(-i)}\mathbf{1}^\top$ (lines~\ref{line:mu}---\ref{line:center}). 
By applying singular value decomposition (SVD) to the centered matrix,
$X^{(-i)}=U_{-i}\Sigma_{-i}V_{-i}^\top$, the server takes the top-$K$ left singular vectors $U^{(-i)}_{K}$ to span a public subspace that captures the shared directions across clients and its projection operator $P_i \;=\; U^{(-i)}_{K}\big(U^{(-i)}_{K}\big)^\top$ that maps any vector to its closest point in that subspace (lines~\ref{line:SVD}--\ref{line:Proj}). For $c_i$'s centered vector $\tilde x_i=x_i-\mu^{(-i)}$, its projection $\hat x_i=P_i\tilde x_i$ represents the component aligned with the public subspace, while the residual $r_i=(I-P_i)\tilde x_i$ lies in the orthogonal complement, where $I$ is the identity matrix (line~\ref{line:residual}). The public subspace concentrates the shared signal, and the orthogonal residual predominantly reflects client-specific noise. Therefore, the noise scale is estimated by a normalized residual energy (line~\ref{line:estimates}):
\begin{equation}
\hat{\sigma}_i^2 \;=\; \frac{\|r_i\|_2^2}{\max(d_{B}-K,\,1)}\,,\qquad
\hat{\sigma}_i \;=\; \sqrt{\hat{\sigma}_i^2}\,,
\end{equation}
where $d_B$ is the dimension of the concatenated and vectorized LoRA $B$ matrices.
The higher residual energy norm $\|r_i\|_2^2$ implies higher injected noise. Details are illustrated in~\Cref{alg:loo_pca_b} in~\Cref{appendix:alg}.

The task-relevant signal that recurs between clients is concentrated in a low-dimensional public subspace~\cite{ding2025sulora}. By contrast, the additive Gaussian noise is zero-mean, independent, and isotropic without directional preference. Thus, its expected projection onto the subspace is zero. Consequently, the noise is predominantly captured by the orthogonal complement of the subspace~\cite{li2024federated}, i.e., the residual energy $||r_i||_2^2$. The same procedure applies to LoRA~$A$. Moreover, since aggregation weight is proportional to relative noise level, the estimate ${\hat\sigma_i}$ preserves the same ranking as the true noise scales.
\begin{figure}[h]
    \centering
    \includegraphics[width=\linewidth]{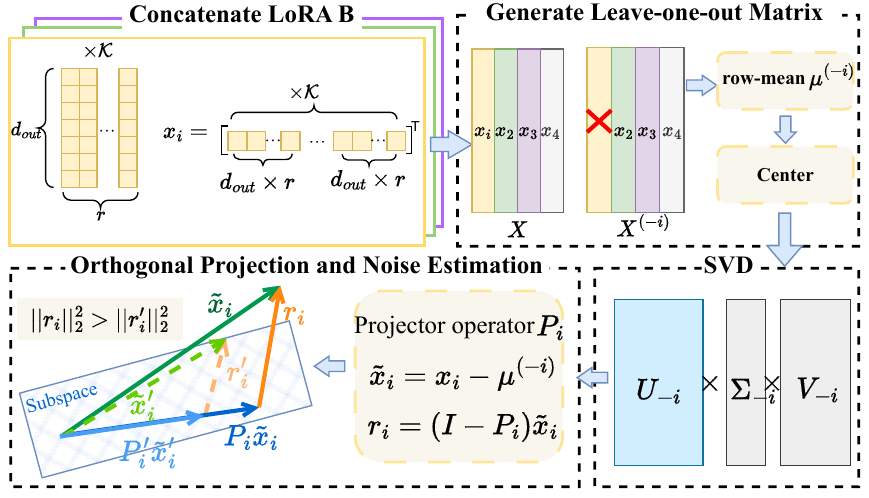}
    \caption{Example of LOO–PCA residual noise estimation. This figure illustrates the per-client noise estimation procedure on LoRA $B$. A larger residual $||r_i||_2^2$ indicates a larger estimated noise.}
    \label{fig:loo-pca}
\end{figure}

\noindent \textbf{Inverse-noise weight incentives.} As detailed in~\Cref{alg:noise_aware_allocation} in~\Cref{appendix:alg}, the server allocates aggregation weights based on each client’s estimated noise scale $\hat{\sigma}_i^t$ to up-weight lower-noise contributions (line~\ref{line:loo}). Specifically, it applies a monotone inverse-noise mapping $s_{i}^t \leftarrow \big(1/(\hat\sigma_{i}^t+\tau)\big)$ (line~\ref{line:score}) with $\tau=1\times10^{-8}$ for numerical stability and then normalizes $w_{i}^t \leftarrow s_{i}^t/\sum_{i=1}^{N} s_{i}^t$ for aggregation (lines~\ref{line:weights}--~\ref{line:aggre}). By prioritizing low-noise updates, the aggregator down-weights noisy contributions and mitigates their adverse impact on the global model, thereby improving global performance. The weight also acts as an incentive, as a larger $w_{i}^t$ translates into greater influence on the aggregated model and, in practice, better adaptation to the client’s downstream task.

\noindent\textbf{Remark.} NWA respects heterogeneous privacy preferences. Clients with stronger privacy preferences rationally maintain higher noise and accept smaller aggregation weights, whereas performance-oriented clients voluntarily de-noise to gain influence. Consequently, the inverse noise weight acts simultaneously as a noise-robust aggregator and as an incentive that aligns the client utilities with the system utility.

\subsection{Individual Noise Adaptation (INA)}
\label{subsec:INA}
In FFT, each client $c_i$ aims to maximize its long-term average utility $\frac{1}{T}\sum_{t=1}^{T} U_i^t$, with $U_i^t$ defined in~\cref{eq:client_utility}. Specifically, in round $t$, it selects a noise scale $\sigma_i^t$ from the noise action set $\Sigma=\{\sigma^{(1)},\cdots,\sigma^{(\mathbb{K})}\}$. The client maintains an empirical estimator $\hat{\mu}_k$ of the expected per-round utility when selecting $\sigma^{(k)}$, and a selection count $n_k$. In round $t$, it computes the upper-confidence-bound (UCB) index $I_k^t$:
\begin{equation}
\label{eq:UCB}
    I_k^t \;=\; \hat{\mu}_k \;+\; \kappa \sqrt{\frac{2\ln t}{\max(1,n_k)}},
\end{equation}
where $\hat{\mu}_k$ is the exploitation term, and the second term is the exploration bonus (line~\ref{line:compute}). The client then selects the action with the largest $I_k^t$, sets $\sigma_i^t=\sigma^{(k)}$, and uploads the locally perturbed update (lines~\ref{line:select}--~\ref{line:upload}). After receiving the new global model $W^{t}$, it evaluates $U_i^t$ (lines~\ref{line:observe}--~\ref{line:utility}) and then updates the chosen action’s statistics for next round: $n_k \leftarrow n_k+1$ and $\hat{\mu}_{k} \leftarrow (1-\beta)\,\hat{\mu}_{k} + \beta\, U_i^t$, where the decay parameter $\beta$ keeps $\hat{\mu}_{k}$ responsive to slow drift due to continually arriving data (line~\ref{line:update}). As the confidence bonus shrinks, the selection stabilizes on the utility-maximizing noise scale. The procedure is summarized in~\Cref{alg:client_best_response}.

Within WinFLoRA, lower noise typically yields larger performance gains by increasing aggregation weights. Thus, performance-oriented clients obtain larger $\hat\mu_k$ at low noise, and UCB concentrates on low-noise actions. Conversely, when strong privacy preferences make a client’s utility larger at a higher noise level, the corresponding high-noise action attains a larger $\hat\mu_k$, and the client's noise adaptation converges to that level. In short, the combination of UCB and WinFLoRA motivates individual noise adaptation toward each client’s utility-maximizing level.

\subsection{Theoretical Analysis} 
\label{subsec:theory}
We model the interaction between the server and clients as a \textbf{stochastic aggregative Markov game (SAMG)}. In each round, clients choose noise levels based on the current global state, and the server aggregates the updates via inverse-noise weighting. Clients' utilities and state transitions depend on client actions through the aggregate statistic. Due to space limitations, the SAMG definition, the stationary Markov equilibrium (SME) of clients, and its existence proof are detailed in~\Cref{appendix: theoretical_analysis}.

\section{Experimental Results}
\begin{table*}[htbp]
\centering
\caption{\textbf{Global and individual performance across various models and datasets. The values of $\mathcal{A}_G$ and $\overline{\mathcal{A}_L}$} are presented in percentage format (\%). The best results are highlighted in light blue and bold.} 
\label{tab:overall_table}
\footnotesize
\setlength{\tabcolsep}{4pt}
\renewcommand{\arraystretch}{1.05}
\begin{tabular}{l l cccc|cccc|cccc}
\toprule
\multirow{3}{*}{Model} & \multirow{3}{*}{Method} &
\multicolumn{4}{c}{AGNews} &
\multicolumn{4}{c}{DBpedia} &
\multicolumn{4}{c}{20Newsgroups} \\
\cmidrule(lr){3-6}\cmidrule(lr){7-10}\cmidrule(lr){11-14}
& & $\mathcal{A}_G$  & $\overline{\mathcal{U}_L}$ & $\overline{\mathcal{A}_L}$  & $\overline{\mathcal{N}}$
  & $\mathcal{A}_G$  & $\overline{\mathcal{U}_L}$ & $\overline{\mathcal{A}_L}$  & $\overline{\mathcal{N}}$
  & $\mathcal{A}_G$  & $\overline{\mathcal{U}_L}$ & $\overline{\mathcal{A}_L}$  & $\overline{\mathcal{N}}$ \\
\midrule
\multirow{7}{*}{TinyLlama}
& FedIT       & 24.55 & 0.421 & 33.32 & 0.574 & 42.56 & 0.488 & 41.94 & 0.605 & 31.83 & 0.456 & 33.45 & 0.759 \\
& Flora       & 22.63 & 0.289 & 0.214  & 0.574 & 55.00 & 0.555 & 52.72 & 0.605 & 27.32 & 0.420 & 30.00 & 0.759 \\
\cmidrule(lr){2-14}
& FedIT+INA   & 27.43 & 0.391 & 26.63 & \cellcolor{cyan!8!white}\textbf{0.675} & 29.54 & 0.424 & 30.38 & \cellcolor{cyan!8!white}\textbf{0.697} & 31.45 & 0.467 & 33.17 & 0.743 \\
& Flora+INA   & 26.84 & 0.396 & 27.35 & 0.616 & 26.65 & 0.409 & 28.86 & 0.691 & 25.02 & 0.402 & 25.12 & \cellcolor{cyan!8!white}\textbf{0.763} \\
& FedMT+INA   & 53.45 & 0.490 & 55.22 & 0.606 & 39.44 & 0.418 & 44.28 & 0.617 & 29.78 & 0.369 & 32.13 & 0.692    \\
\cmidrule(lr){2-14}
& \cellcolor{cyan!8!white}\textbf{WinFLoRA} & \cellcolor{cyan!8!white}\textbf{75.21} & \cellcolor{cyan!8!white}\textbf{0.739} & \cellcolor{cyan!8!white}\textbf{76.03} & 0.574 & \cellcolor{cyan!8!white}\textbf{62.44} & \cellcolor{cyan!8!white}\textbf{0.638} & \cellcolor{cyan!8!white}\textbf{62.85} & 0.605 & \cellcolor{cyan!8!white}\textbf{42.58} & \cellcolor{cyan!8!white}\textbf{0.551} & \cellcolor{cyan!8!white}\textbf{43.96} & 0.759 \\
\midrule
\multirow{7}{*}{GPT2-Large}
& FedIT       & 65.33 & 0.634 & 65.56 & 0.547 & 40.00 & 0.448 & 38.66 & 0.600 & 32.36 & 0.446 & 33.35 & 0.736 \\
& Flora       & 62.54 & 0.653 & 67.48 & 0.547 & 50.98 & 0.532 & 51.61 & 0.600 & 40.05 & 0.493 & 39.81 & 0.736 \\
\cmidrule(lr){2-14}
& FedIT+INA   & 58.97 & 0.626 & 59.94 & \cellcolor{cyan!8!white}\textbf{0.569} & 33.38 & 0.497 & 36.56 & 0.664 & 31.96 & 0.456 & 32.52 & 0.748 \\
& Flora+INA   & 52.75 & 0.521 & 53.71 & 0.560 & 31.62 & 0.459 & 32.58 & \cellcolor{cyan!8!white}\textbf{0.742} & 41.53 & 0.542 & 42.57 & \cellcolor{cyan!8!white}\textbf{0.786} \\
& FedMT+INA   & 56.68 & 0.582 & 55.21  & 0.509 & 35.40 & 0.500 & 40.14 & 0.583 & 40.25 & 0.481 & 39.12  & 0.635    \\
\cmidrule(lr){2-14}
& \cellcolor{cyan!8!white}\textbf{WinFLoRA}& \cellcolor{cyan!8!white}\textbf{79.76} & \cellcolor{cyan!8!white}\textbf{0.726} & \cellcolor{cyan!8!white}\textbf{76.02} & 0.547 & \cellcolor{cyan!8!white}\textbf{56.04} & \cellcolor{cyan!8!white}\textbf{0.611} & \cellcolor{cyan!8!white}\textbf{56.98} & 0.600 & \cellcolor{cyan!8!white}\textbf{44.68} & \cellcolor{cyan!8!white}\textbf{0.569} & \cellcolor{cyan!8!white}\textbf{48.06} & 0.736 \\
\bottomrule
\end{tabular}

\end{table*}

\label{sec:exp_results}
\subsection{Experimental Setting}
\noindent\textbf{Large language models and datasets.} We evaluate our approach on two widely used models, including \textbf{TinyLlama}~\cite{zhang2024tinyllama} and \textbf{GPT2-Large}~\cite{radford2019language}, across three benchmark datasets covering diverse text classification tasks, i.e., \textbf{AGNews}~\cite{zhang2015character}, \textbf{DBpedia}~\cite{zhang2015character}, and \textbf{20Newsgroups}~\cite{lang1995newsweeder}. Details can be found in ~\Cref{appendix:models_detail}.

\noindent \textbf{Benchmarks.} We compare WinFLoRA with five benchmarks that span federated fine tuning and incentive mechanisms. Details are provided in~\Cref{appendix:benchmarks}.

\noindent \textbf{Evaluation metrics.} Four metrics are employed to assess the global model performance and the client-side utility, covering both local performance and privacy protection.
\begin{itemize}[leftmargin=*,noitemsep]
    \item \textbf{Global accuracy $\mathcal{A}_G$.} $\mathcal{A}_G$ measures the overall task performance of the global model, calculated by $\frac{1}{T}\sum_{t=1}^T \mathcal{A}_g^t$, where $\mathcal{A}_g^t$ is the evaluation accuracy of the global model in round $t$. 
    \item \textbf{Average utility $\overline{\mathcal{U}_L}$.} It measures clients' utilities by combining local performance and privacy preservation, i.e., $\overline{\mathcal{U}_L}= \frac{1}{NT}\sum_{t=1}^{T}$ $\sum_{i=1}^{N} \mathcal{U}_i^{\,t}$, where $\mathcal{U}_i^{\,t}=\frac{\mathcal{A}_i^{\,t}+\gamma_i \mathcal{P}_i^{\,t}}{1+\gamma_i}$. 
    \item \textbf{Average local accuracy $\overline{\mathcal{A}_L}$.} This measures how well the global model fits client specific data distributions on average. Defined as $\overline{\mathcal{A}_L}=\frac{1}{NT}\sum_{t=1}^T\sum_{i=1}^N \mathcal{A}_i^t$. 
    \item \textbf{Average noise scale $\overline{\mathcal{N}}$.} It indicates the scale of client side noise. Defined as $\overline{\mathcal{N}}=\frac{1}{TN}\sum_{t=1}^T\sum_{i=1}^N \sigma_i^t/\sigma_{\max}$. Higher values reflect stronger privacy protection on average.
\end{itemize}

\noindent\textbf{Experimental setup and implementation details.}
By default, datasets are split non-IID with a Dirichlet partition with $\alpha_{dir}=0.3$ over {N=10} clients, with 500 training samples per client, following FL/FFT common settings~\cite{koo2024towards,liu2024decentralized}. We run $T=20$ communication rounds of federated LoRA. Each client's privacy preference is sampled from $\gamma_i \sim \mathcal{N}(\mu,\,0.1^2)$ with $\mu=0.5$, and the noise upper bound is $\sigma_{\max}=0.1$.
Each client selects $\sigma_i^t=\sigma_k/\sigma_{max}$, where $\sigma_k \in \Sigma=\{0,\,0.1,\,0.5,\,1.0\}$. More details are provided in~\Cref{appendix:settings}

\subsection{Overall Performance}
\Cref{tab:overall_table} demonstrates that WinFLoRA outperforms existing benchmarks with large margins across multiple models and datasets.
As shown in~\Cref{tab:overall_table}, WinFLoRA achieves $\mathcal{A}_G$ of 75.21\% and 79.76\% on AGNews with TinyLlama and GPT2-Large, outperforming the benchmarks by 21.76-52.58\% and 14.43-27.01\%, respectively. Specially, WinFLoRA achieves $\overline{\mathcal{A}_L}$ of 76.03\% on TinyLlama and 76.02\% on GPT2-Large, exceeding the benchmarks by 20.81–54.63\% and 10.46–28.34\%. Similar advantages hold on DBpedia and 20NG. These results indicate that WinFLoRA consistently amplifies low-noise, high-quality updates and discounts high-noise contributions, thereby preventing severe degradation of the global model. When the $\overline{\mathcal{N}}$ matches the non-INA benchmarks of 0.574 on AGNews, WinFLoRA outperforms FedIT and Flora by 50.66\% and 52.58\% in $\mathcal{A}_G$, respectively. This is because average weighting ignores noise scale, fails to suppress high-noise updates and reduces global accuracy. Compared with INA benchmarks where clients adapt noise over rounds, WinFLoRA achieves slightly lower $\overline{\mathcal{N}}$ on AGNews and DBpedia by 3.1-14.9\% and a comparable level on 20NG, while $\mathcal{A}_G$ remains substantially higher. Although FedMT+INA encourages clients to trade privacy for accuracy and lowers $\overline{\mathcal{N}}$ on GPT2-Large, WinFLoRA attains 23.08\%, 20.64\% and 4.43\% higher $\mathcal{A}_G$ on AGNews, DBpedia, and 20NG, respectively. This is because FedMT+INA rewards relative noise reductions, but high-noise updates still retain substantial weight. These demonstrate that noise-aware weighting improves global accuracy while accommodating heterogeneous privacy preferences.

\noindent\textbf{Individual utility.} 
Average client utility is consistently higher with WinFLoRA, indicating that self-interested choices are effectively incentivized. With TinyLlama, the $\overline{\mathcal{U}_L}$ reaches 0.739, 0.638, and 0.551 on AGNews, DBpedia, and 20NG, respectively, representing 1.50-2.55$\times$ achieved by benchmarks. Higher $\overline{\mathcal{U}_L}$, alongside higher $\mathcal{A}_G$, indicates that heterogeneous clients obtain higher rewards under WinFLoRA. Lower-noise clients receive stronger, more positive feedback of greater influence, while privacy-prioritizing clients learn to stabilize at an optimal noise scale where their own utility peaks, driven by their privacy requirements.

\subsection{Ablation Study}
\noindent\textbf{Effectiveness of NWA.}
To evaluate the effectiveness of NWA, we compare the global performance and convergence with and without NWA. On TinyLlama, NWA brings significant advantages in global accuracy and convergence, i.e. $\mathcal{A}_g^T$ of 80.07\%, which is 54.63\% higher than without NWA. On GPT2-Large, it attains $\mathcal{A}_g^T$ of 84.33\%, 28.39\% higher than that without NWA. NWA also achieves more rapid and stable convergence, stabilizing by round 8, whereas it takes longer to stabilize without NWA, i.e., after round 16. On GPT2-large, with NWA achieves 2 rounds earlier stabilization than without NWA. In terms of $\overline{\mathcal{U}_L}$, NWA shows similar advantages. With NWA, $\overline{\mathcal{U}_L}$ increases steadily and stabilizes near 0.790 and 0.754 on TinyLlama and GPT2-Large, respectively, 0.406 and 0.204 higher than without NWA.
NAW down-weights noisier client updates and feeds this weighting signal back to the clients, effectively translating the marginal global degradation caused by added noise into an immediate reduction in the client's own performance reward. This reduces the strategic reactivity of each client to others’ actions, thereby promoting faster and more stable convergence.
\begin{figure}[h]
  \centering
    \begin{subfigure}[t]{\linewidth}
    \centering
    \includegraphics[width=0.5\linewidth]{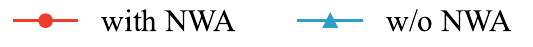}
  \end{subfigure}\hfill
  \begin{subfigure}[t]{0.48\linewidth}
    \centering
    \includegraphics[width=\linewidth]{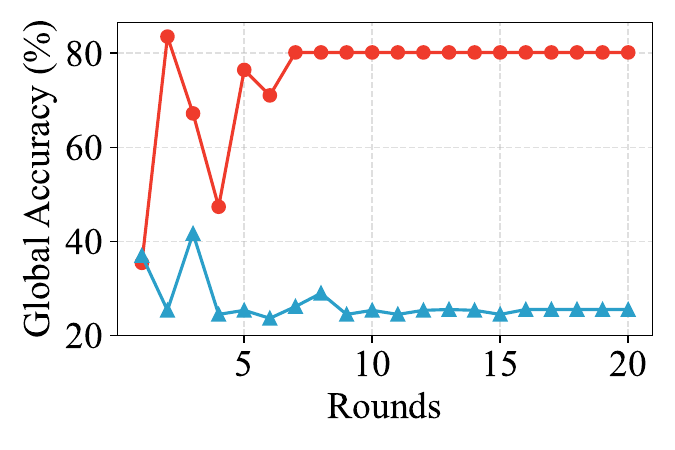}
    \subcaption{Global Accuracy (TinyLlama)}
    \label{subfig:conver_global_tiny}
  \end{subfigure}\hfill
  \begin{subfigure}[t]{0.48\linewidth}
    \centering
    \includegraphics[width=\linewidth]{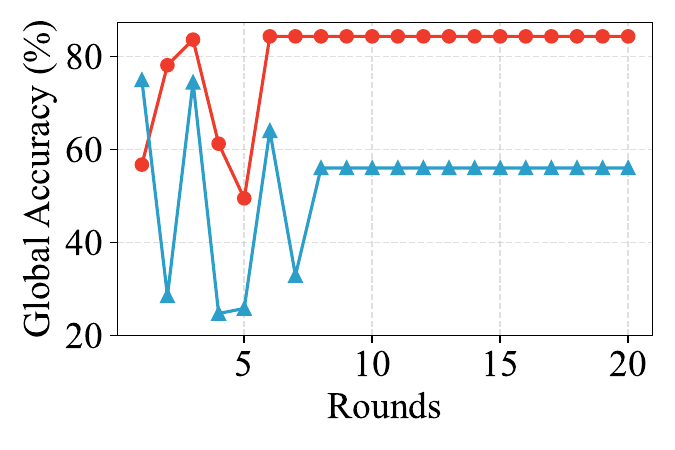}
    \subcaption{Global Accuracy (GPT2-Large)}
  \end{subfigure}
  
  \begin{subfigure}[t]{0.48\linewidth}
    \centering
    \includegraphics[width=\linewidth]{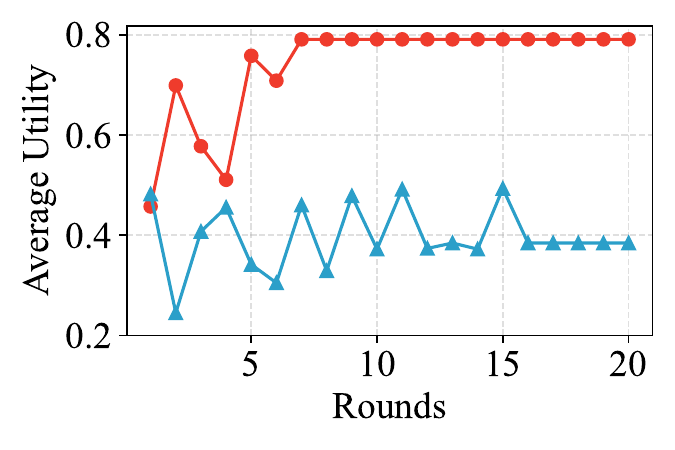}
    \subcaption{Average Utility (TinyLlama)}
    \label{subfig:conver_util_tiny}
  \end{subfigure}\hfill
  \begin{subfigure}[t]{0.48\linewidth}
    \centering
    \includegraphics[width=\linewidth]{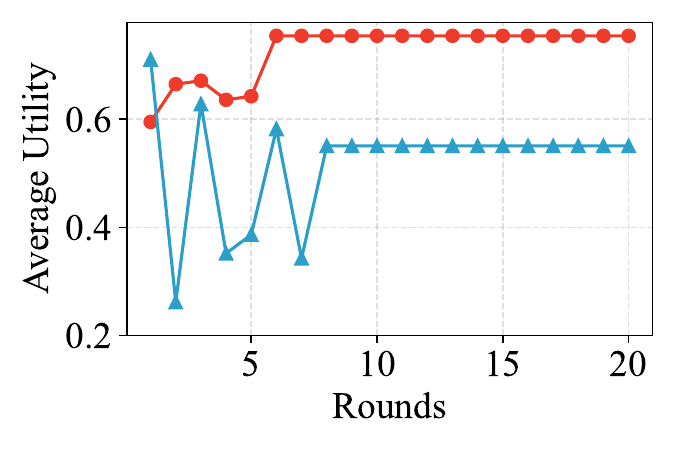}
    \subcaption{Average Utility (GPT2-Large)}
  \end{subfigure}

  \caption{Performance on AGNews with and without NWA. “Without NWA” refers to a no-incentive baseline with average aggregation. Clients adapt noise with INA in both settings.}
  \label{fig:incentive_effectiveness}
\end{figure}

\noindent\textbf{Adaptability to privacy heterogeneity.} 
To evaluate performance under heterogeneous privacy preferences, we sample $\gamma_i \sim \mathcal{N}(\mu, 0.1^2)$ with $\mu \in \{0.3, 0.4, 0.5, 0.6, 0.7\}$, where a larger $\mu$ indicates the client with a higher privacy preference, i.e., privacy-oriented. As shown in ~\Cref{fig:privacy_preference}, when performance-oriented clients dominate with $\mu=0.3$, the system achieves a high global accuracy $\mathcal{A}_G$ of $83.52\%$ on TinyLlama and $79.66\%$ on GPT2-Large, together with a high average utility $\bar{\mathcal{U}}_L$ of $0.801$ and $0.789$, respectively. As $\mu$ increases, the equilibrium shifts toward stronger privacy and the mean noise $\overline{\mathcal{N}}$ rises from $0.481$ to $0.668$ on TinyLlama and from $0.552$ to $0.610$ on GPT2-Large, with a decline in $\mathcal{A}_G$. Even at $\mu=0.7$, where $\overline{\mathcal{N}}=0.688$ on TinyLlama and $0.610$ on GPT2-Large, the system still achieves $\mathcal{A}_G$ of $30.66\%$ and $50.8\%$, respectively.

\noindent\textbf{Impact of noise action scales.} 
We evaluate the impact of the granularity of clients’ noise action sets on performance and convergence with coarse (the default noise action set), moderate, and fine in~\Cref{subfig:conver_global_tiny}. Compared to the coarse set, the moderate and fine sets raise $\mathcal{A}_G$ to 83.08\% and 84.58\%, outperforming the coarse set by 7.87\% and 9.37\%, respectively, as shown in~\Cref{subfig:strategy_acc}. Finer grids enable the selection of $\sigma_i^t$ closer to the optimal utility-maximizing noise scale. Accordingly, as shown in~\Cref{subfig:strategy_util}, $\overline{\mathcal{U}_L}$ increases to 0.777 and 0.783 with the moderate and fine set, respectively, compared with the coarse set shown in~\Cref{subfig:conver_util_tiny}. The trade-off is slower stabilization. The coarse set stabilizes fastest, the moderate set stabilizes at 9 rounds (+2 rounds), and the fine set at 21 rounds (+14 rounds). Finer granularity shrinks the effective step size, so more exploration is required before clients reach a high-utility steady state. In practice, one should tune the actions set granularity to match system requirements, prioritizing either faster convergence or higher global accuracy.

\begin{figure}[h]
  \centering
  \begin{subfigure}[t]{\linewidth}
    \centering
    \includegraphics[width=0.5\linewidth]{figs/legend_1.pdf}
  \end{subfigure}\hfill
  \begin{subfigure}[t]{0.48\linewidth}
    \centering
    \includegraphics[width=\linewidth]{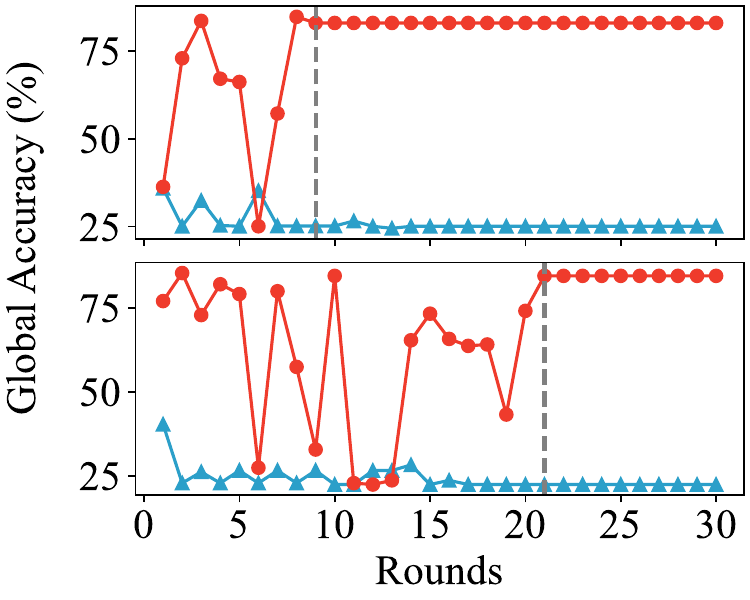}
    \subcaption{Global Accuracy.}
    \label{subfig:strategy_acc}
  \end{subfigure}\hfill
  \begin{subfigure}[t]{0.48\linewidth}
    \centering
    \includegraphics[width=\linewidth]{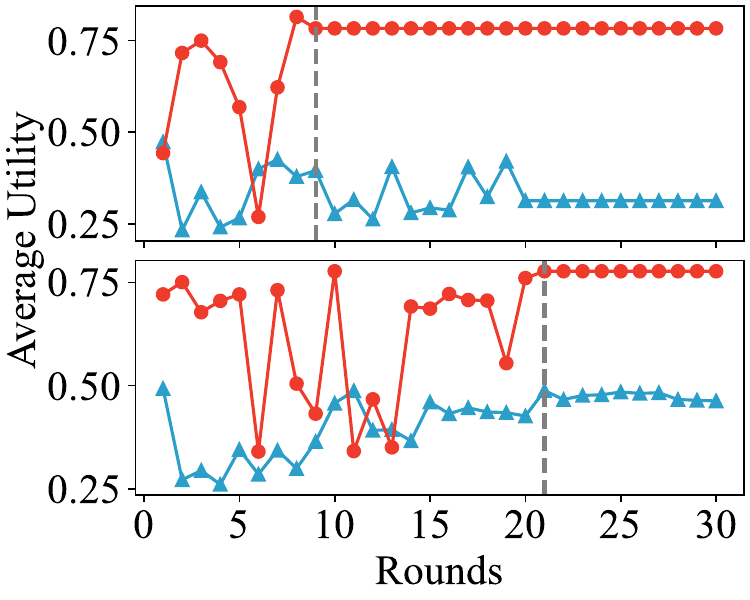}
    \subcaption{Average utility.}
    \label{subfig:strategy_util}
  \end{subfigure}
  \caption{Performance vs. noise action scales. $\mathcal{A}_G$ and $\overline{\mathcal{U}_L}$ on the AGNews dataset with WinFLoRA under two optional noise scale sets, compared against a w/o NWA benchmark. Top: moderate set $\Sigma_{mod}=\{0.0,0.2,\cdots,1.0\}$. Bottom: fine set $\Sigma_{fine}=\{0.0,0.1,\cdots,1.0\}$.}
  \label{fig:strategy}
\end{figure}

\noindent\textbf{Adaptability to data heterogeneity.} 
As data heterogeneity weakens with Dirichlet concentration, we evaluate the impact of $\alpha_{dir}$ varying from 0.1 to 0.5 on WinFLoRA in~\Cref{fig:noniid}. In general, WinFLoRA enhances privacy protection without compromising performance. 
As shown in~\Cref{fig:noniid}, $\overline{\mathcal{N}}$ increases from 0.485 to a peak of 0.647 on TinyLlama and from 0.514 to 0.880 on GPT2-Large, accompanied by improvements in $\mathcal{A}_G$ by 25.60\% and 48.32\%, and in $\overline{\mathcal{U}_L}$ by 0.185 and 0.259, respectively. As client update directions become increasingly aligned, low-noise clients carry most of the useful signal in the aggregate, whereas high-noise clients preserve privacy with negligible impact on the global model. Near the IID regime at $\alpha_{dir}=0.5$ on TinyLlama, 
the drop of $\overline{\mathcal{N}}$ to 0.583 reflects diminishing privacy returns from added noise and sharper performance trade-offs. Consequently, lower noise becomes preferable, trading off for the increase in $\mathcal{A}_G$ to 79.75\% and in $\overline{\mathcal{U}_L}$ to 0.765. By contrast, GPT2-Large offers greater capacity and redundancy to tolerate higher $\overline{\mathcal{N}}$ while improving both $\mathcal{A}_G$ and $\overline{\mathcal{U}_L}$.
\begin{figure}[ht]
  \centering
  \begin{subfigure}[t]{\linewidth}
    \centering
    \includegraphics[width=0.9\linewidth]{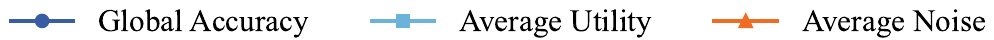}
  \end{subfigure}\hfill
  \begin{subfigure}[t]{0.49\linewidth}
    \centering
    \includegraphics[width=\linewidth]{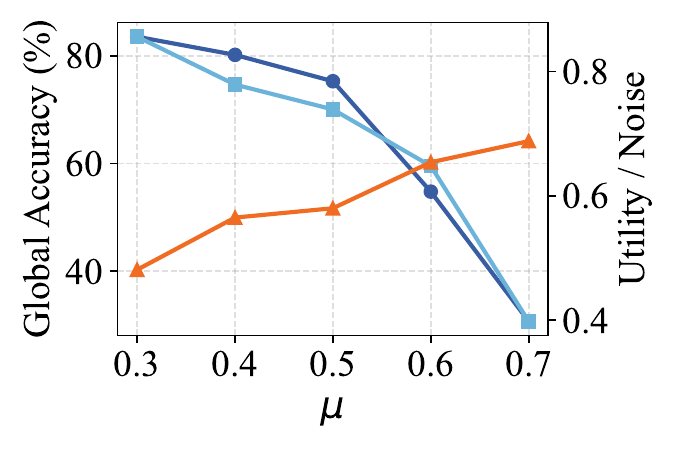}
    \subcaption{TinyLlama.}
  \end{subfigure}\hfill
  \begin{subfigure}[t]{0.49\linewidth}
    \centering
    \includegraphics[width=\linewidth]{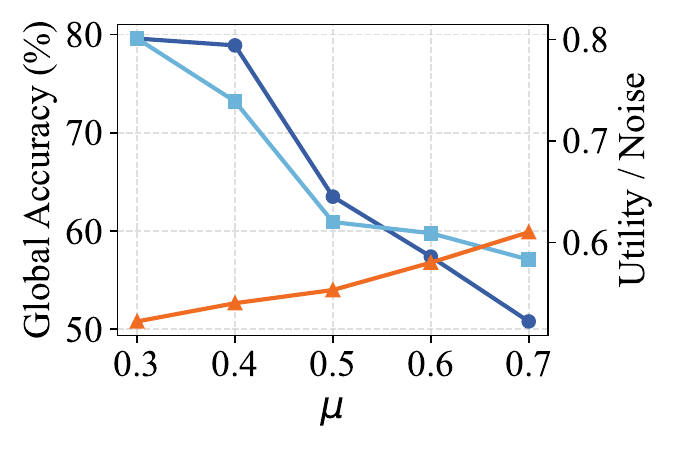}
    \subcaption{GPT2-Large.}
  \end{subfigure}
  \caption{Performance vs. privacy preference. $\mathcal{A}_G$, $\overline{\mathcal{U}_L}$ and $\overline{\mathcal{N}}$ across client's privacy preference, with the Gaussian mean $\mu$ for sampling $\gamma_i$ increasing from 0.3 to 0.7.}
  \label{fig:privacy_preference}
\end{figure}

\begin{figure}[h]
  \centering
    \begin{subfigure}[t]{\linewidth}
    \centering
    \includegraphics[width=\linewidth]{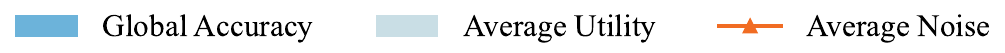}
  \end{subfigure}\hfill
  \begin{subfigure}[t]{0.49\linewidth}
    \centering
    \includegraphics[width=\linewidth]{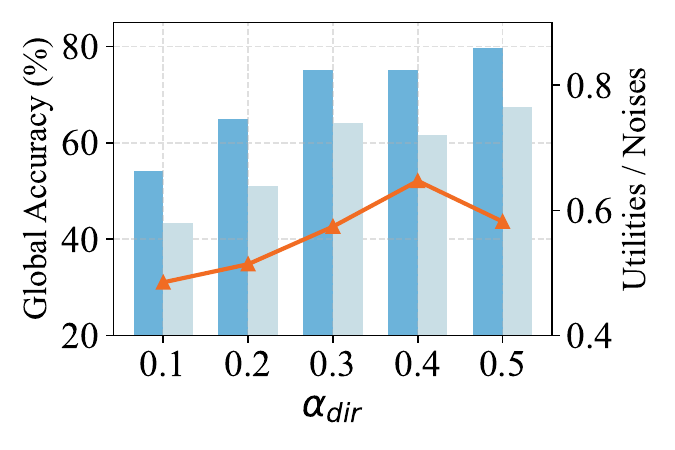}
    \subcaption{TinyLlama.}
  \end{subfigure}\hfill
  \begin{subfigure}[t]{0.49\linewidth}
    \centering
    \includegraphics[width=\linewidth]{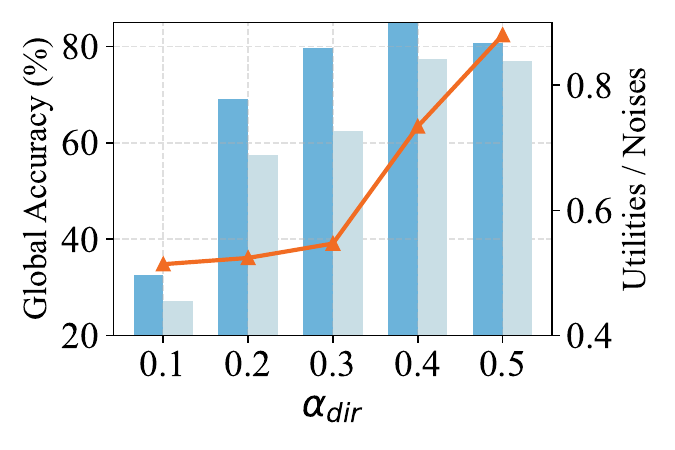}
    \subcaption{GPT2-Large.}
  \end{subfigure}
  \caption{Performance vs. $\alpha_{dir}$. $\mathcal{A}_G$, $\overline{\mathcal{U}_L}$ and  $\overline{\mathcal{N}}$ across non-iid settings with $\alpha_{dir}$ increasing from 0.1 to 0.5.}
  \label{fig:noniid}
\end{figure}

\noindent\textbf{Impact of client numbers.} 
We evaluate the performance under various system scales with N increasing from 6 to 14. As the number of clients $N$ increases, $\mathcal{A}_G$ increases from 49.58\% to 78.55\% on TinyLlama, and from 55.45\% to 85.58\% on GPT2-Large. Meanwhile, the $\overline{\mathcal{N}}$ decreases from 0.603 to 0.510 and on TinyLlama and 0.713 to 0.494 on GPT2-Large, as shown in~\Cref{fig:client_number}. The reason is that the growing number of clients intensifies competition for influence on the global model, as aggregation weights are normalized to sum to 1. Under noise-aware aggregation, a practical way to gain influence is to submit a cleaner (lower-noise) update. Performance-oriented clients, therefore, reduce noise to capture higher marginal performance gains. When global accuracy begins to plateau, the competition is amplified, resulting in a step-down in $\overline{\mathcal{N}}$ at $N=14$ on TinyLlama and $N=8$ on GPT2-Large.
\begin{figure}[ht]
  \centering
    \begin{subfigure}[t]{\linewidth}
    \centering
    \includegraphics[width=0.9\linewidth]{figs/legend_3.pdf}
  \end{subfigure}\hfill
  \begin{subfigure}[t]{0.49\linewidth}
    \centering
    \includegraphics[width=\linewidth]{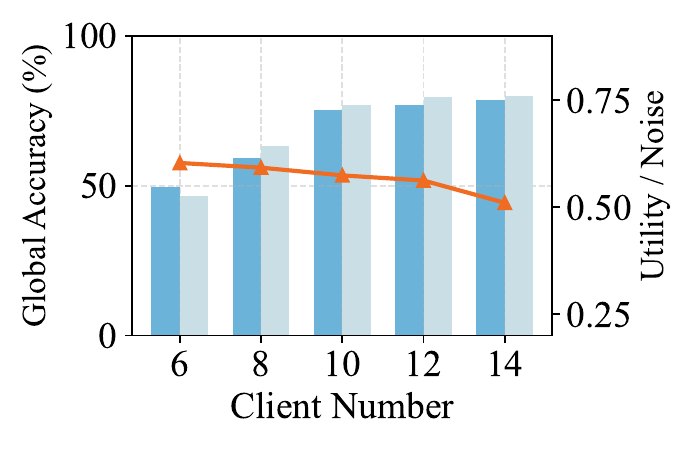}
    \subcaption{TinyLlama.}
  \end{subfigure}\hfill
  \begin{subfigure}[t]{0.49\linewidth}
    \centering
    \includegraphics[width=\linewidth]{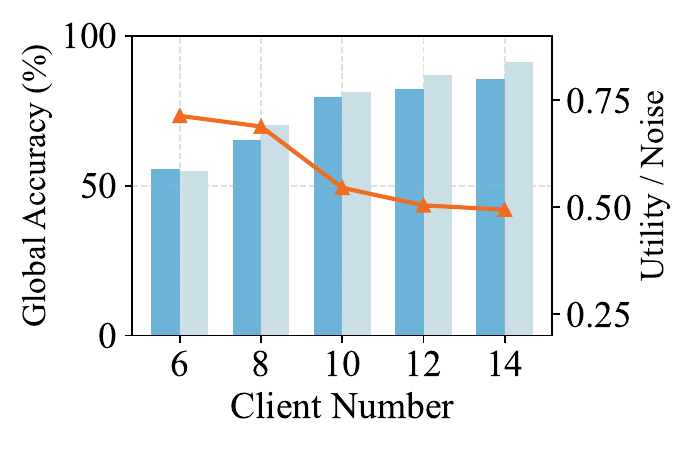}
    \subcaption{GPT2-Large.}
  \end{subfigure}
  \caption{Performance vs. number of clients. $\mathcal{A}_G$, $\overline{\mathcal{U}_L}$, and $\overline{\mathcal{N}}$ across system scales with the number of clients increasing from 6 to 14.}
  \label{fig:client_number}
\end{figure}

\noindent\textbf{Individual improvement from FFT.}
Clients are motivated to participate in WinFLoRA to obtain higher individual utility, compared to fine-tuning in isolation, with or without local noise. As shown in~\Cref{fig:standalone}, on TinyLlama, the average accuracy improves by 14.38\%-22.26\%, $1.24\times$-$1.67\times$ over standalone and by 33.11\%-51.42\%, $3.16\times$-$5.34\times$ over Sta(N), respectively. The advantages of WinFLoRA show a similar pattern on GPT2-Large. This demonstrates the strong motivation for clients to participate in WinFLoRA as they benefit from it. This improvement stems from two key factors: aggregation enables clients to benefit from knowledge in domains they do not observe locally, and WinFLoRA further enhances this by down-weighting noisy ones, thus avoiding the performance degradation.

\begin{figure}[ht]
  \centering
  \begin{subfigure}[t]{0.32\linewidth}
    \centering
    \includegraphics[width=\linewidth]{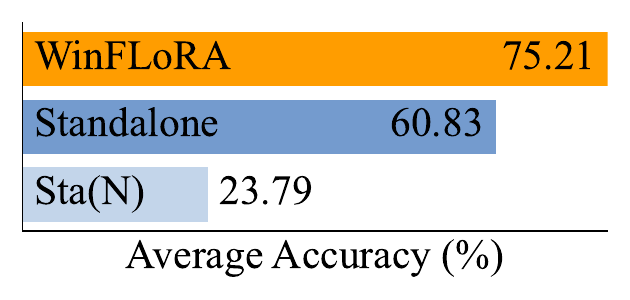}
    \subcaption{TinyLlama(AGNews).}
  \end{subfigure}
    \begin{subfigure}[t]{0.32\linewidth}
    \centering
    \includegraphics[width=\linewidth]{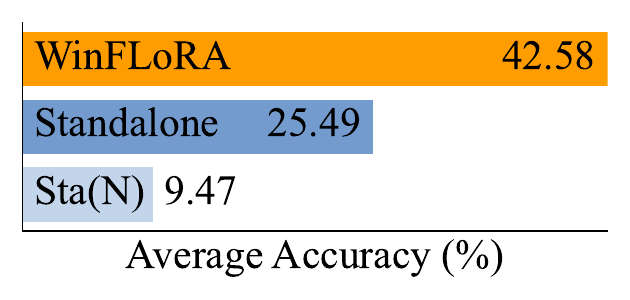}
    \subcaption{TinyLlama(DBpedia).}
  \end{subfigure}
  \begin{subfigure}[t]{0.32\linewidth}
    \centering
    \includegraphics[width=\linewidth]{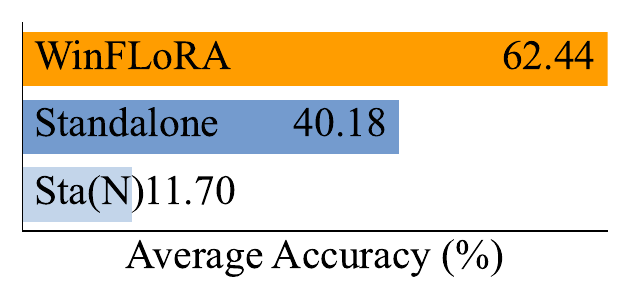}
    \subcaption{TinyLlama(20NG).}
  \end{subfigure}
    \begin{subfigure}[t]{0.32\linewidth}
    \centering
    \includegraphics[width=\linewidth]{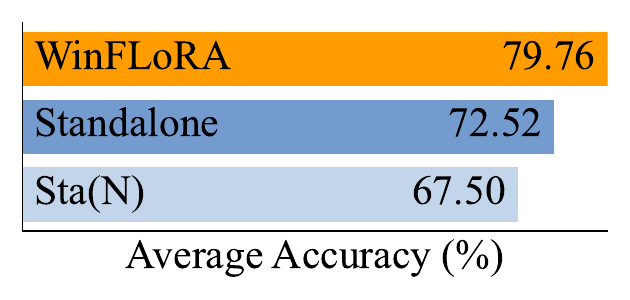}
    \subcaption{GPT2-L(AGNews).}
  \end{subfigure}
  \begin{subfigure}[t]{0.32\linewidth}
    \centering
    \includegraphics[width=\linewidth]{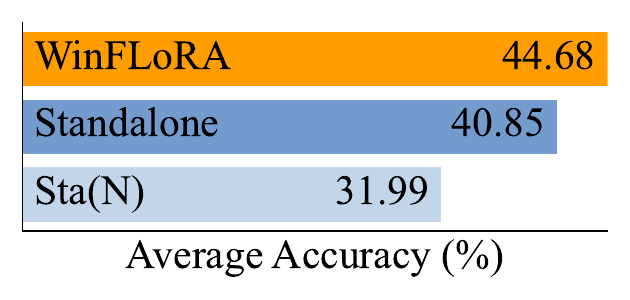}
    \subcaption{GPT2-L(DBpedia).}
  \end{subfigure}
    \begin{subfigure}[t]{0.32\linewidth}
    \centering
    \includegraphics[width=\linewidth]{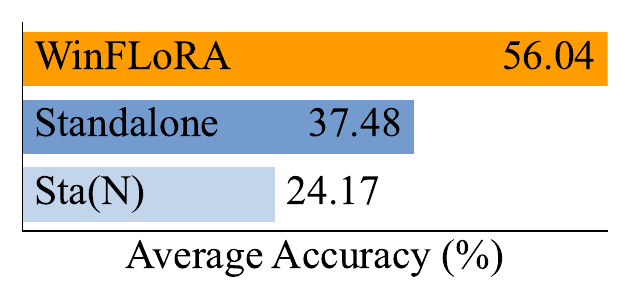}
    \subcaption{GPT2-L(20NG).}
  \end{subfigure}
  \caption{Average accuracy under standalone FT. Clients fine-tune locally without aggregation, evaluated on all classes of the dataset. Results are shown for clean LoRA (Standalone) and noisy LoRA (Sta(N)). }
  \label{fig:standalone}
\end{figure}
\noindent\textbf{Impact of noise estimation.} 
To evaluate the impact of the noise estimation, we compare the true injected noise scales with the estimated scales from two perspectives: each client's noise scale trajectory over communication rounds, and the noise distribution across all clients at a fixed round (e.g., $t=1$). As illustrated in~\Cref{fig:estimate_time}, the average gap between estimated and true noise is below 1\%, indicating that estimation does not distort incentives. Over rounds, the estimated noise closely tracks the true noise for each client across all datasets, so the incentive has a negligible effect on their utility-maximizing behavior. Within each aggregation round, the cross-client ranking of estimated noise also matches the true ranking, ensuring that cleaner updates are consistently up-weighted, which is the key property for effective incentives and weighted aggregation.

\begin{figure}[ht]
  \centering
    \begin{subfigure}[t]{\linewidth}
    \centering
    \includegraphics[width=0.4\linewidth]{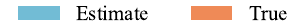}
  \end{subfigure}\hfill
  \begin{subfigure}[t]{0.33\linewidth}
    \centering
    \includegraphics[width=\linewidth]{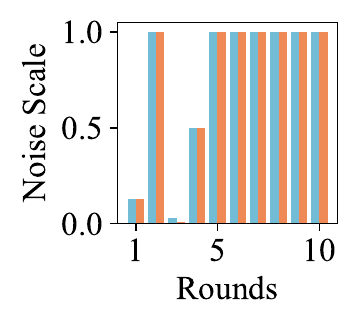}
    \subcaption{AGNews.}
  \end{subfigure}\hfill
  \begin{subfigure}[t]{0.33\linewidth}
    \centering
    \includegraphics[width=\linewidth]{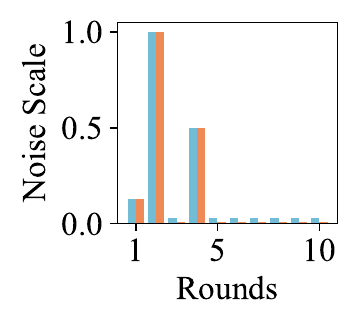}
    \subcaption{DBpedia.}
  \end{subfigure}
  \begin{subfigure}[t]{0.33\linewidth}
    \centering
    \includegraphics[width=\linewidth]{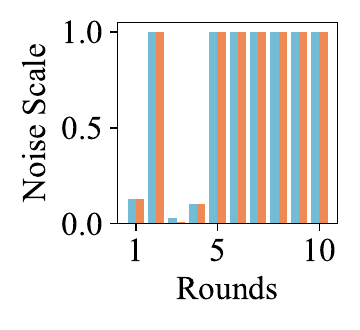}
    \subcaption{20NG.}
  \end{subfigure}
  \begin{subfigure}[t]{0.33\linewidth}
    \centering
    \includegraphics[width=\linewidth]{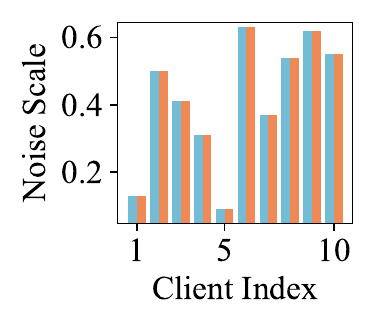}
    \subcaption{AGNews.}
  \end{subfigure}\hfill
  \begin{subfigure}[t]{0.33\linewidth}
    \centering
    \includegraphics[width=\linewidth]{figs/estimate_cn_tiny.pdf}
    \subcaption{DBpedia.}
  \end{subfigure}
  \begin{subfigure}[t]{0.33\linewidth}
    \centering
    \includegraphics[width=\linewidth]{figs/estimate_cn_tiny.pdf}
    \subcaption{20NG.}
  \end{subfigure}
  \caption{Estimated noise vs. true noise. Top: client $c_1$'s noise scale over rounds. Bottom: per-client noise scale in round $t_1$.}
  \label{fig:estimate_time}
\end{figure}
To evaluate the tolerance to estimation bias, we apply a perturbation to the estimated noise $\hat{\sigma}_i^t$ by replacing $\hat{\sigma}_i^t$ with $\tilde{\sigma}_i^t=(\hat{\sigma}_i^t)^\rho$ and recompute aggregation weights from $\tilde{\sigma}_i^t$. As shown in~\cref{fig:estimate_bias}, the $\mathcal{A}_g$ achieves 89.76-94.91\%, 88.34-89.26\%, and  92.94-95.52\% of the no-bias baseline ($\rho=1$), on AGnews, DBpedia and 20NG, respectively. The mild degradation indicates that the relative ranking of clients' noise scales is sufficient for NWA and the overall performance remains strong within WinFLoRA despite estimation bias.

\begin{figure}[ht]
  \centering
    \begin{subfigure}[t]{\linewidth}
    \centering
    \includegraphics[width=0.5\linewidth]{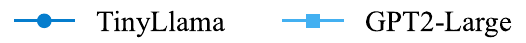}
  \end{subfigure}\hfill
  \begin{subfigure}[t]{0.33\linewidth}
    \centering
    \includegraphics[width=\linewidth]{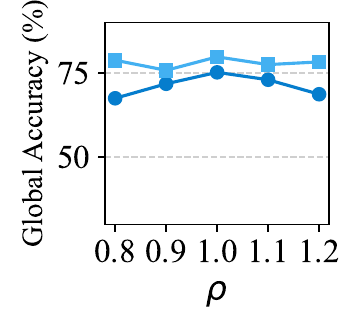}
    \subcaption{AGNews.}
  \end{subfigure}\hfill
  \begin{subfigure}[t]{0.33\linewidth}
    \centering
    \includegraphics[width=\linewidth]{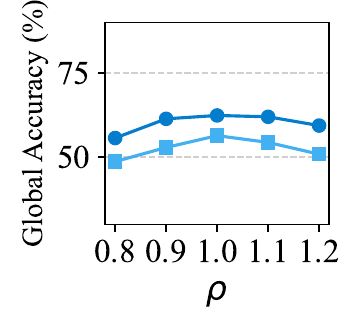}
    \subcaption{DBpedia.}
  \end{subfigure}
  \begin{subfigure}[t]{0.33\linewidth}
    \centering
    \includegraphics[width=\linewidth]{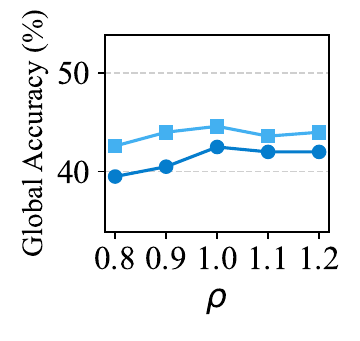}
    \subcaption{20NG.}
  \end{subfigure}
  \caption{Global accuracy vs. bias factor $\rho$. $\mathcal{A}_G$ across $\rho \in \{0.8,0.9,1.0,1.1,1.2\}$ with $\rho=1$ denoting no estimation bias.}
  \label{fig:estimate_bias}
\end{figure}

\section{Conclusion}
In this paper, we address the fundamental trade-off between client-side privacy preservation and global model performance in federated fine-tuning under privacy heterogeneity. We propose WinFLoRA, a noise-aware aggregation framework that assigns higher influence to lower-noise updates while allowing privacy-prioritizing clients to retain strong privacy guarantees. By leveraging inverse-noise weighting as an incentive mechanism, WinFLoRA improves global model accuracy and aligns each client’s individual utility with the overall system objective. As a result, clients are encouraged to contribute higher-quality updates without compromising their privacy preferences. Overall, WinFLoRA achieves substantial gains in both global accuracy and average client utility in privacy-heterogeneous federated LoRA settings.

\section*{Acknowledgment}
This work is supported by the Australian Research Council Discovery Project DP220100215. 


\bibliographystyle{ACM-Reference-Format}
\balance
\bibliography{main}


\begin{thebibliography}{57}


\ifx \showCODEN    \undefined \def \showCODEN     #1{\unskip}     \fi
\ifx \showISBNx    \undefined \def \showISBNx     #1{\unskip}     \fi
\ifx \showISBNxiii \undefined \def \showISBNxiii  #1{\unskip}     \fi
\ifx \showISSN     \undefined \def \showISSN      #1{\unskip}     \fi
\ifx \showLCCN     \undefined \def \showLCCN      #1{\unskip}     \fi
\ifx \shownote     \undefined \def \shownote      #1{#1}          \fi
\ifx \showarticletitle \undefined \def \showarticletitle #1{#1}   \fi
\ifx \showURL      \undefined \def \showURL       {\relax}        \fi
\providecommand\bibfield[2]{#2}
\providecommand\bibinfo[2]{#2}
\providecommand\natexlab[1]{#1}
\providecommand\showeprint[2][]{arXiv:#2}

\bibitem[Achiam et~al\mbox{.}(2023)]%
        {achiam2023gpt}
\bibfield{author}{\bibinfo{person}{Josh Achiam}, \bibinfo{person}{Steven Adler}, \bibinfo{person}{Sandhini Agarwal}, \bibinfo{person}{Lama Ahmad}, \bibinfo{person}{Ilge Akkaya}, \bibinfo{person}{Florencia~Leoni Aleman}, \bibinfo{person}{Diogo Almeida}, \bibinfo{person}{Janko Altenschmidt}, \bibinfo{person}{Sam Altman}, \bibinfo{person}{Shyamal Anadkat}, {et~al\mbox{.}}} \bibinfo{year}{2023}\natexlab{}.
\newblock \showarticletitle{Gpt-4 technical report}.
\newblock \bibinfo{journal}{\emph{arXiv preprint arXiv:2303.08774}} (\bibinfo{year}{2023}).
\newblock


\bibitem[Babakniya et~al\mbox{.}(2023)]%
        {babakniya2023slora}
\bibfield{author}{\bibinfo{person}{Sara Babakniya}, \bibinfo{person}{Ahmed~Roushdy Elkordy}, \bibinfo{person}{Yahya~H Ezzeldin}, \bibinfo{person}{Qingfeng Liu}, \bibinfo{person}{Kee-Bong Song}, \bibinfo{person}{Mostafa El-Khamy}, {and} \bibinfo{person}{Salman Avestimehr}.} \bibinfo{year}{2023}\natexlab{}.
\newblock \showarticletitle{Slora: Federated parameter efficient fine-tuning of language models}.
\newblock \bibinfo{journal}{\emph{arXiv preprint arXiv:2308.06522}} (\bibinfo{year}{2023}).
\newblock


\bibitem[Birrell et~al\mbox{.}(2024)]%
        {birrell2024differentially}
\bibfield{author}{\bibinfo{person}{Jeremiah Birrell}, \bibinfo{person}{Reza Ebrahimi}, \bibinfo{person}{Rouzbeh Behnia}, {and} \bibinfo{person}{Jason Pacheco}.} \bibinfo{year}{2024}\natexlab{}.
\newblock \showarticletitle{Differentially private stochastic gradient descent with fixed-size minibatches: Tighter RDP guarantees with or without replacement}.
\newblock \bibinfo{journal}{\emph{Advances in Neural Information Processing Systems}}  \bibinfo{volume}{37} (\bibinfo{year}{2024}), \bibinfo{pages}{11087--11131}.
\newblock


\bibitem[Cai et~al\mbox{.}(2023)]%
        {cai2023efficient}
\bibfield{author}{\bibinfo{person}{Dongqi Cai}, \bibinfo{person}{Yaozong Wu}, \bibinfo{person}{Shangguang Wang}, \bibinfo{person}{Felix~Xiaozhu Lin}, {and} \bibinfo{person}{Mengwei Xu}.} \bibinfo{year}{2023}\natexlab{}.
\newblock \showarticletitle{Efficient federated learning for modern nlp}. In \bibinfo{booktitle}{\emph{Proceedings of the 29th annual international conference on mobile computing and networking}}. \bibinfo{pages}{1--16}.
\newblock


\bibitem[Chen et~al\mbox{.}(2024a)]%
        {chen2024fairreward}
\bibfield{author}{\bibinfo{person}{Guorong Chen}, \bibinfo{person}{Chao Li}, \bibinfo{person}{Wei Wang}, \bibinfo{person}{Li Duan}, \bibinfo{person}{Bin Wang}, \bibinfo{person}{Zhen Han}, {and} \bibinfo{person}{Xiangliang Zhang}.} \bibinfo{year}{2024}\natexlab{a}.
\newblock \showarticletitle{Fairreward: Towards fair reward distribution using equity theory in blockchain-based federated learning}.
\newblock \bibinfo{journal}{\emph{IEEE Transactions on Dependable and Secure Computing}} (\bibinfo{year}{2024}).
\newblock


\bibitem[Chen et~al\mbox{.}(2024c)]%
        {chen2024feddat}
\bibfield{author}{\bibinfo{person}{Haokun Chen}, \bibinfo{person}{Yao Zhang}, \bibinfo{person}{Denis Krompass}, \bibinfo{person}{Jindong Gu}, {and} \bibinfo{person}{Volker Tresp}.} \bibinfo{year}{2024}\natexlab{c}.
\newblock \showarticletitle{Feddat: An approach for foundation model finetuning in multi-modal heterogeneous federated learning}. In \bibinfo{booktitle}{\emph{Proceedings of the AAAI Conference on Artificial Intelligence}}, Vol.~\bibinfo{volume}{38}. \bibinfo{pages}{11285--11293}.
\newblock


\bibitem[Chen et~al\mbox{.}(2024b)]%
        {chen2024rbla}
\bibfield{author}{\bibinfo{person}{Shuaijun Chen}, \bibinfo{person}{Omid Tavallaie}, \bibinfo{person}{Niousha Nazemi}, {and} \bibinfo{person}{Albert~Y Zomaya}.} \bibinfo{year}{2024}\natexlab{b}.
\newblock \showarticletitle{Rbla: Rank-based-lora-aggregation for fine-tuning heterogeneous models in flaas}. In \bibinfo{booktitle}{\emph{International Conference on Web Services}}. Springer, \bibinfo{pages}{47--62}.
\newblock


\bibitem[Chen et~al\mbox{.}(2024d)]%
        {chen2024fdfl}
\bibfield{author}{\bibinfo{person}{Zhe Chen}, \bibinfo{person}{Haiyan Zhang}, \bibinfo{person}{Xinghua Li}, \bibinfo{person}{Yinbin Miao}, \bibinfo{person}{Xiaohan Zhang}, \bibinfo{person}{Man Zhang}, \bibinfo{person}{Siqi Ma}, {and} \bibinfo{person}{Robert~H Deng}.} \bibinfo{year}{2024}\natexlab{d}.
\newblock \showarticletitle{FDFL: Fair and discrepancy-aware incentive mechanism for federated learning}.
\newblock \bibinfo{journal}{\emph{IEEE Transactions on Information Forensics and Security}} (\bibinfo{year}{2024}).
\newblock


\bibitem[Ding et~al\mbox{.}(2025)]%
        {ding2025sulora}
\bibfield{author}{\bibinfo{person}{Chenhao Ding}, \bibinfo{person}{Jiangyang Li}, \bibinfo{person}{Songlin Dong}, \bibinfo{person}{Xinyuan Gao}, \bibinfo{person}{Yuhang He}, {and} \bibinfo{person}{Yihong Gong}.} \bibinfo{year}{2025}\natexlab{}.
\newblock \showarticletitle{SuLoRA: Subspace Low-Rank Adaptation for Parameter-Efficient Fine-Tuning}. In \bibinfo{booktitle}{\emph{Findings of the Association for Computational Linguistics: ACL 2025}}. \bibinfo{pages}{5334--5349}.
\newblock


\bibitem[Ding et~al\mbox{.}(2022)]%
        {ding2022delta}
\bibfield{author}{\bibinfo{person}{Ning Ding}, \bibinfo{person}{Yujia Qin}, \bibinfo{person}{Guang Yang}, \bibinfo{person}{Fuchao Wei}, \bibinfo{person}{Zonghan Yang}, \bibinfo{person}{Yusheng Su}, \bibinfo{person}{Shengding Hu}, \bibinfo{person}{Yulin Chen}, \bibinfo{person}{Chi-Min Chan}, \bibinfo{person}{Weize Chen}, {et~al\mbox{.}}} \bibinfo{year}{2022}\natexlab{}.
\newblock \showarticletitle{Delta tuning: A comprehensive study of parameter efficient methods for pre-trained language models}.
\newblock \bibinfo{journal}{\emph{arXiv preprint arXiv:2203.06904}} (\bibinfo{year}{2022}).
\newblock


\bibitem[Dodge et~al\mbox{.}(2020)]%
        {dodge2020fine}
\bibfield{author}{\bibinfo{person}{Jesse Dodge}, \bibinfo{person}{Gabriel Ilharco}, \bibinfo{person}{Roy Schwartz}, \bibinfo{person}{Ali Farhadi}, \bibinfo{person}{Hannaneh Hajishirzi}, {and} \bibinfo{person}{Noah Smith}.} \bibinfo{year}{2020}\natexlab{}.
\newblock \showarticletitle{Fine-tuning pretrained language models: Weight initializations, data orders, and early stopping}.
\newblock \bibinfo{journal}{\emph{arXiv preprint arXiv:2002.06305}} (\bibinfo{year}{2020}).
\newblock


\bibitem[Dwork et~al\mbox{.}(2006)]%
        {dwork2006calibrating}
\bibfield{author}{\bibinfo{person}{Cynthia Dwork}, \bibinfo{person}{Frank McSherry}, \bibinfo{person}{Kobbi Nissim}, {and} \bibinfo{person}{Adam Smith}.} \bibinfo{year}{2006}\natexlab{}.
\newblock \showarticletitle{Calibrating noise to sensitivity in private data analysis}. In \bibinfo{booktitle}{\emph{Theory of cryptography conference}}. Springer, \bibinfo{pages}{265--284}.
\newblock


\bibitem[Fu et~al\mbox{.}(2023)]%
        {fu2023practical}
\bibfield{author}{\bibinfo{person}{Wenjie Fu}, \bibinfo{person}{Huandong Wang}, \bibinfo{person}{Chen Gao}, \bibinfo{person}{Guanghua Liu}, \bibinfo{person}{Yong Li}, {and} \bibinfo{person}{Tao Jiang}.} \bibinfo{year}{2023}\natexlab{}.
\newblock \showarticletitle{Practical membership inference attacks against fine-tuned large language models via self-prompt calibration}.
\newblock \bibinfo{journal}{\emph{arXiv preprint arXiv:2311.06062}} (\bibinfo{year}{2023}).
\newblock


\bibitem[Gao et~al\mbox{.}(2025)]%
        {gao2025federated}
\bibfield{author}{\bibinfo{person}{Zhidong Gao}, \bibinfo{person}{Zhenxiao Zhang}, \bibinfo{person}{Yuanxiong Guo}, {and} \bibinfo{person}{Yanmin Gong}.} \bibinfo{year}{2025}\natexlab{}.
\newblock \showarticletitle{Federated Adaptive Fine-Tuning of Large Language Models with Heterogeneous Quantization and LoRA}. In \bibinfo{booktitle}{\emph{IEEE INFOCOM 2025-IEEE Conference on Computer Communications}}. IEEE, \bibinfo{pages}{1--10}.
\newblock


\bibitem[Houlsby et~al\mbox{.}(2019)]%
        {houlsby2019parameter}
\bibfield{author}{\bibinfo{person}{Neil Houlsby}, \bibinfo{person}{Andrei Giurgiu}, \bibinfo{person}{Stanislaw Jastrzebski}, \bibinfo{person}{Bruna Morrone}, \bibinfo{person}{Quentin De~Laroussilhe}, \bibinfo{person}{Andrea Gesmundo}, \bibinfo{person}{Mona Attariyan}, {and} \bibinfo{person}{Sylvain Gelly}.} \bibinfo{year}{2019}\natexlab{}.
\newblock \showarticletitle{Parameter-efficient transfer learning for NLP}. In \bibinfo{booktitle}{\emph{International conference on machine learning}}. PMLR, \bibinfo{pages}{2790--2799}.
\newblock


\bibitem[Hu et~al\mbox{.}(2022a)]%
        {hu2022lora}
\bibfield{author}{\bibinfo{person}{Edward~J Hu}, \bibinfo{person}{Yelong Shen}, \bibinfo{person}{Phillip Wallis}, \bibinfo{person}{Zeyuan Allen-Zhu}, \bibinfo{person}{Yuanzhi Li}, \bibinfo{person}{Shean Wang}, \bibinfo{person}{Lu Wang}, \bibinfo{person}{Weizhu Chen}, {et~al\mbox{.}}} \bibinfo{year}{2022}\natexlab{a}.
\newblock \showarticletitle{Lora: Low-rank adaptation of large language models.}
\newblock \bibinfo{journal}{\emph{ICLR}} \bibinfo{volume}{1}, \bibinfo{number}{2} (\bibinfo{year}{2022}), \bibinfo{pages}{3}.
\newblock


\bibitem[Hu et~al\mbox{.}(2022b)]%
        {hu2022incentive}
\bibfield{author}{\bibinfo{person}{Miao Hu}, \bibinfo{person}{Di Wu}, \bibinfo{person}{Yipeng Zhou}, \bibinfo{person}{Xu Chen}, {and} \bibinfo{person}{Min Chen}.} \bibinfo{year}{2022}\natexlab{b}.
\newblock \showarticletitle{Incentive-aware autonomous client participation in federated learning}.
\newblock \bibinfo{journal}{\emph{IEEE Transactions on Parallel and Distributed Systems}} \bibinfo{volume}{33}, \bibinfo{number}{10} (\bibinfo{year}{2022}), \bibinfo{pages}{2612--2627}.
\newblock


\bibitem[Hu et~al\mbox{.}(2023)]%
        {hu2023llm}
\bibfield{author}{\bibinfo{person}{Zhiqiang Hu}, \bibinfo{person}{Lei Wang}, \bibinfo{person}{Yihuai Lan}, \bibinfo{person}{Wanyu Xu}, \bibinfo{person}{Ee-Peng Lim}, \bibinfo{person}{Lidong Bing}, \bibinfo{person}{Xing Xu}, \bibinfo{person}{Soujanya Poria}, {and} \bibinfo{person}{Roy Ka-Wei Lee}.} \bibinfo{year}{2023}\natexlab{}.
\newblock \showarticletitle{Llm-adapters: An adapter family for parameter-efficient fine-tuning of large language models}.
\newblock \bibinfo{journal}{\emph{arXiv preprint arXiv:2304.01933}} (\bibinfo{year}{2023}).
\newblock


\bibitem[Koo et~al\mbox{.}(2024)]%
        {koo2024towards}
\bibfield{author}{\bibinfo{person}{Jabin Koo}, \bibinfo{person}{Minwoo Jang}, {and} \bibinfo{person}{Jungseul Ok}.} \bibinfo{year}{2024}\natexlab{}.
\newblock \showarticletitle{Towards robust and efficient federated low-rank adaptation with heterogeneous clients}.
\newblock \bibinfo{journal}{\emph{arXiv preprint arXiv:2410.22815}} (\bibinfo{year}{2024}).
\newblock


\bibitem[Kuo et~al\mbox{.}(2024)]%
        {kuo2024federated}
\bibfield{author}{\bibinfo{person}{Kevin Kuo}, \bibinfo{person}{Arian Raje}, \bibinfo{person}{Kousik Rajesh}, {and} \bibinfo{person}{Virginia Smith}.} \bibinfo{year}{2024}\natexlab{}.
\newblock \showarticletitle{Federated lora with sparse communication}.
\newblock \bibinfo{journal}{\emph{arXiv preprint arXiv:2406.05233}} (\bibinfo{year}{2024}).
\newblock


\bibitem[Lang(1995)]%
        {lang1995newsweeder}
\bibfield{author}{\bibinfo{person}{Ken Lang}.} \bibinfo{year}{1995}\natexlab{}.
\newblock \showarticletitle{Newsweeder: Learning to filter netnews}.
\newblock In \bibinfo{booktitle}{\emph{Machine learning proceedings 1995}}. \bibinfo{publisher}{Elsevier}, \bibinfo{pages}{331--339}.
\newblock


\bibitem[Li et~al\mbox{.}(2024)]%
        {li2024federated}
\bibfield{author}{\bibinfo{person}{Jingyang Li}, \bibinfo{person}{T~Tony Cai}, \bibinfo{person}{Dong Xia}, {and} \bibinfo{person}{Anru~R Zhang}.} \bibinfo{year}{2024}\natexlab{}.
\newblock \showarticletitle{Federated PCA and Estimation for Spiked Covariance Matrices: Optimal Rates and Efficient Algorithm}.
\newblock \bibinfo{journal}{\emph{arXiv preprint arXiv:2411.15660}} (\bibinfo{year}{2024}).
\newblock


\bibitem[Li and Liang(2021)]%
        {li2021prefix}
\bibfield{author}{\bibinfo{person}{Xiang~Lisa Li} {and} \bibinfo{person}{Percy Liang}.} \bibinfo{year}{2021}\natexlab{}.
\newblock \showarticletitle{Prefix-tuning: Optimizing continuous prompts for generation}.
\newblock \bibinfo{journal}{\emph{arXiv preprint arXiv:2101.00190}} (\bibinfo{year}{2021}).
\newblock


\bibitem[Lin et~al\mbox{.}(2023)]%
        {lin2023heterogeneous}
\bibfield{author}{\bibinfo{person}{Xi Lin}, \bibinfo{person}{Jun Wu}, \bibinfo{person}{Jianhua Li}, \bibinfo{person}{Chao Sang}, \bibinfo{person}{Shiyan Hu}, {and} \bibinfo{person}{M~Jamal Deen}.} \bibinfo{year}{2023}\natexlab{}.
\newblock \showarticletitle{Heterogeneous differential-private federated learning: Trading privacy for utility truthfully}.
\newblock \bibinfo{journal}{\emph{IEEE Transactions on Dependable and Secure Computing}} \bibinfo{volume}{20}, \bibinfo{number}{6} (\bibinfo{year}{2023}), \bibinfo{pages}{5113--5129}.
\newblock


\bibitem[Liu et~al\mbox{.}(2024)]%
        {liu2024decentralized}
\bibfield{author}{\bibinfo{person}{Yingqi Liu}, \bibinfo{person}{Yifan Shi}, \bibinfo{person}{Qinglun Li}, \bibinfo{person}{Baoyuan Wu}, \bibinfo{person}{Xueqian Wang}, {and} \bibinfo{person}{Li Shen}.} \bibinfo{year}{2024}\natexlab{}.
\newblock \showarticletitle{Decentralized directed collaboration for personalized federated learning}. In \bibinfo{booktitle}{\emph{Proceedings of the IEEE/CVF conference on computer vision and pattern recognition}}. \bibinfo{pages}{23168--23178}.
\newblock


\bibitem[Lotfi et~al\mbox{.}(2024)]%
        {lotfi2024unlocking}
\bibfield{author}{\bibinfo{person}{Sanae Lotfi}, \bibinfo{person}{Yilun Kuang}, \bibinfo{person}{Marc Finzi}, \bibinfo{person}{Brandon Amos}, \bibinfo{person}{Micah Goldblum}, {and} \bibinfo{person}{Andrew~G Wilson}.} \bibinfo{year}{2024}\natexlab{}.
\newblock \showarticletitle{Unlocking tokens as data points for generalization bounds on larger language models}.
\newblock \bibinfo{journal}{\emph{Advances in Neural Information Processing Systems}}  \bibinfo{volume}{37} (\bibinfo{year}{2024}), \bibinfo{pages}{9229--9256}.
\newblock


\bibitem[Lukas et~al\mbox{.}(2023)]%
        {lukas2023analyzing}
\bibfield{author}{\bibinfo{person}{Nils Lukas}, \bibinfo{person}{Ahmed Salem}, \bibinfo{person}{Robert Sim}, \bibinfo{person}{Shruti Tople}, \bibinfo{person}{Lukas Wutschitz}, {and} \bibinfo{person}{Santiago Zanella-B{\'e}guelin}.} \bibinfo{year}{2023}\natexlab{}.
\newblock \showarticletitle{Analyzing leakage of personally identifiable information in language models}. In \bibinfo{booktitle}{\emph{2023 IEEE Symposium on Security and Privacy (SP)}}. IEEE, \bibinfo{pages}{346--363}.
\newblock


\bibitem[Luo et~al\mbox{.}(2023)]%
        {luo2023incentive}
\bibfield{author}{\bibinfo{person}{Bing Luo}, \bibinfo{person}{Yutong Feng}, \bibinfo{person}{Shiqiang Wang}, \bibinfo{person}{Jianwei Huang}, {and} \bibinfo{person}{Leandros Tassiulas}.} \bibinfo{year}{2023}\natexlab{}.
\newblock \showarticletitle{Incentive mechanism design for unbiased federated learning with randomized client participation}. In \bibinfo{booktitle}{\emph{2023 IEEE 43rd International Conference on Distributed Computing Systems (ICDCS)}}. IEEE, \bibinfo{pages}{545--555}.
\newblock


\bibitem[Mao et~al\mbox{.}(2024)]%
        {mao2024game}
\bibfield{author}{\bibinfo{person}{Wuxing Mao}, \bibinfo{person}{Qian Ma}, \bibinfo{person}{Guocheng Liao}, {and} \bibinfo{person}{Xu Chen}.} \bibinfo{year}{2024}\natexlab{}.
\newblock \showarticletitle{Game analysis and incentive mechanism design for differentially private cross-silo federated learning}.
\newblock \bibinfo{journal}{\emph{IEEE Transactions on Mobile Computing}} \bibinfo{volume}{23}, \bibinfo{number}{10} (\bibinfo{year}{2024}), \bibinfo{pages}{9337--9351}.
\newblock


\bibitem[McMahan et~al\mbox{.}(2017)]%
        {mcmahan2017communication}
\bibfield{author}{\bibinfo{person}{Brendan McMahan}, \bibinfo{person}{Eider Moore}, \bibinfo{person}{Daniel Ramage}, \bibinfo{person}{Seth Hampson}, {and} \bibinfo{person}{Blaise~Aguera y Arcas}.} \bibinfo{year}{2017}\natexlab{}.
\newblock \showarticletitle{Communication-efficient learning of deep networks from decentralized data}. In \bibinfo{booktitle}{\emph{Artificial intelligence and statistics}}. PMLR, \bibinfo{pages}{1273--1282}.
\newblock


\bibitem[Nguyen et~al\mbox{.}(2024)]%
        {nguyen2024towards}
\bibfield{author}{\bibinfo{person}{Ngoc-Hieu Nguyen}, \bibinfo{person}{Tuan-Anh Nguyen}, \bibinfo{person}{Tuan Nguyen}, \bibinfo{person}{Vu~Tien Hoang}, \bibinfo{person}{Dung~D Le}, {and} \bibinfo{person}{Kok-Seng Wong}.} \bibinfo{year}{2024}\natexlab{}.
\newblock \showarticletitle{Towards efficient communication and secure federated recommendation system via low-rank training}. In \bibinfo{booktitle}{\emph{Proceedings of the ACM Web Conference 2024}}. \bibinfo{pages}{3940--3951}.
\newblock


\bibitem[Radford et~al\mbox{.}(2019)]%
        {radford2019language}
\bibfield{author}{\bibinfo{person}{Alec Radford}, \bibinfo{person}{Jeffrey Wu}, \bibinfo{person}{Rewon Child}, \bibinfo{person}{David Luan}, \bibinfo{person}{Dario Amodei}, \bibinfo{person}{Ilya Sutskever}, {et~al\mbox{.}}} \bibinfo{year}{2019}\natexlab{}.
\newblock \showarticletitle{Language models are unsupervised multitask learners}.
\newblock \bibinfo{journal}{\emph{OpenAI blog}} \bibinfo{volume}{1}, \bibinfo{number}{8} (\bibinfo{year}{2019}), \bibinfo{pages}{9}.
\newblock


\bibitem[Rajbhandari et~al\mbox{.}(2020)]%
        {rajbhandari2020zero}
\bibfield{author}{\bibinfo{person}{Samyam Rajbhandari}, \bibinfo{person}{Jeff Rasley}, \bibinfo{person}{Olatunji Ruwase}, {and} \bibinfo{person}{Yuxiong He}.} \bibinfo{year}{2020}\natexlab{}.
\newblock \showarticletitle{ZeRO: Memory Optimizations Toward Training Trillion Parameter Models}. In \bibinfo{booktitle}{\emph{SC '20: International Conference for High Performance Computing, Networking, Storage and Analysis}}. \bibinfo{publisher}{IEEE}.
\newblock
\urldef\tempurl%
\url{https://sc20.supercomputing.org/proceedings/tech_paper/tech_paper_pages/pap379.html}
\showURL{%
\tempurl}


\bibitem[Rashid et~al\mbox{.}(2025)]%
        {rashid2025trustworthy}
\bibfield{author}{\bibinfo{person}{Md~Mamunur Rashid}, \bibinfo{person}{Yong Xiang}, \bibinfo{person}{Md~Palash Uddin}, \bibinfo{person}{Jine Tang}, \bibinfo{person}{Keshav Sood}, {and} \bibinfo{person}{Longxiang Gao}.} \bibinfo{year}{2025}\natexlab{}.
\newblock \showarticletitle{Trustworthy and fair federated learning via reputation-based consensus and adaptive incentives}.
\newblock \bibinfo{journal}{\emph{IEEE Transactions on Information Forensics and Security}} (\bibinfo{year}{2025}).
\newblock


\bibitem[Sun et~al\mbox{.}(2024)]%
        {sun2024improving}
\bibfield{author}{\bibinfo{person}{Youbang Sun}, \bibinfo{person}{Zitao Li}, \bibinfo{person}{Yaliang Li}, {and} \bibinfo{person}{Bolin Ding}.} \bibinfo{year}{2024}\natexlab{}.
\newblock \showarticletitle{Improving lora in privacy-preserving federated learning}.
\newblock \bibinfo{journal}{\emph{arXiv preprint arXiv:2403.12313}} (\bibinfo{year}{2024}).
\newblock


\bibitem[Tang et~al\mbox{.}(2024)]%
        {tang2024merit}
\bibfield{author}{\bibinfo{person}{Yongyang Tang}, \bibinfo{person}{Zhe Chen}, \bibinfo{person}{Ang Li}, \bibinfo{person}{Tianyue Zheng}, \bibinfo{person}{Zheng Lin}, \bibinfo{person}{Jia Xu}, \bibinfo{person}{Pin Lv}, \bibinfo{person}{Zhe Sun}, {and} \bibinfo{person}{Yue Gao}.} \bibinfo{year}{2024}\natexlab{}.
\newblock \showarticletitle{Merit: Multimodal wearable vital sign waveform monitoring}. In \bibinfo{booktitle}{\emph{2024 IEEE Smart World Congress (SWC)}}. IEEE, \bibinfo{pages}{1112--1119}.
\newblock


\bibitem[Team et~al\mbox{.}(2023)]%
        {team2023gemini}
\bibfield{author}{\bibinfo{person}{Gemini Team}, \bibinfo{person}{Rohan Anil}, \bibinfo{person}{Sebastian Borgeaud}, \bibinfo{person}{Jean-Baptiste Alayrac}, \bibinfo{person}{Jiahui Yu}, \bibinfo{person}{Radu Soricut}, \bibinfo{person}{Johan Schalkwyk}, \bibinfo{person}{Andrew~M Dai}, \bibinfo{person}{Anja Hauth}, \bibinfo{person}{Katie Millican}, {et~al\mbox{.}}} \bibinfo{year}{2023}\natexlab{}.
\newblock \showarticletitle{Gemini: a family of highly capable multimodal models}.
\newblock \bibinfo{journal}{\emph{arXiv preprint arXiv:2312.11805}} (\bibinfo{year}{2023}).
\newblock


\bibitem[Touvron et~al\mbox{.}(2023)]%
        {touvron2023llama}
\bibfield{author}{\bibinfo{person}{Hugo Touvron}, \bibinfo{person}{Thibaut Lavril}, \bibinfo{person}{Gautier Izacard}, \bibinfo{person}{Xavier Martinet}, \bibinfo{person}{Marie-Anne Lachaux}, \bibinfo{person}{Timoth{\'e}e Lacroix}, \bibinfo{person}{Baptiste Rozi{\`e}re}, \bibinfo{person}{Naman Goyal}, \bibinfo{person}{Eric Hambro}, \bibinfo{person}{Faisal Azhar}, {et~al\mbox{.}}} \bibinfo{year}{2023}\natexlab{}.
\newblock \showarticletitle{Llama: Open and efficient foundation language models}.
\newblock \bibinfo{journal}{\emph{arXiv preprint arXiv:2302.13971}} (\bibinfo{year}{2023}).
\newblock


\bibitem[Wang et~al\mbox{.}(2022)]%
        {wang2022privaim}
\bibfield{author}{\bibinfo{person}{Dan Wang}, \bibinfo{person}{Ju Ren}, \bibinfo{person}{Zhibo Wang}, \bibinfo{person}{Yichuan Wang}, {and} \bibinfo{person}{Yaoxue Zhang}.} \bibinfo{year}{2022}\natexlab{}.
\newblock \showarticletitle{Privaim: A dual-privacy preserving and quality-aware incentive mechanism for federated learning}.
\newblock \bibinfo{journal}{\emph{IEEE Trans. Comput.}} \bibinfo{volume}{72}, \bibinfo{number}{7} (\bibinfo{year}{2022}), \bibinfo{pages}{1913--1927}.
\newblock


\bibitem[Wang et~al\mbox{.}(2024c)]%
        {wang2024balancing}
\bibfield{author}{\bibinfo{person}{Naiyu Wang}, \bibinfo{person}{Shen Wang}, \bibinfo{person}{Meng Li}, \bibinfo{person}{Longfei Wu}, \bibinfo{person}{Zijian Zhang}, \bibinfo{person}{Zhitao Guan}, {and} \bibinfo{person}{Liehuang Zhu}.} \bibinfo{year}{2024}\natexlab{c}.
\newblock \showarticletitle{Balancing differential privacy and utility: A relevance-based adaptive private fine-tuning framework for language models}.
\newblock \bibinfo{journal}{\emph{IEEE Transactions on Information Forensics and Security}} (\bibinfo{year}{2024}).
\newblock


\bibitem[Wang et~al\mbox{.}(2024a)]%
        {wang2024flora}
\bibfield{author}{\bibinfo{person}{Ziyao Wang}, \bibinfo{person}{Zheyu Shen}, \bibinfo{person}{Yexiao He}, \bibinfo{person}{Guoheng Sun}, \bibinfo{person}{Hongyi Wang}, \bibinfo{person}{Lingjuan Lyu}, {and} \bibinfo{person}{Ang Li}.} \bibinfo{year}{2024}\natexlab{a}.
\newblock \showarticletitle{Flora: Federated fine-tuning large language models with heterogeneous low-rank adaptations}.
\newblock \bibinfo{journal}{\emph{Advances in Neural Information Processing Systems}}  \bibinfo{volume}{37} (\bibinfo{year}{2024}), \bibinfo{pages}{22513--22533}.
\newblock


\bibitem[Wang et~al\mbox{.}(2024b)]%
        {wang2024one}
\bibfield{author}{\bibinfo{person}{Ziyao Wang}, \bibinfo{person}{Bowei Tian}, \bibinfo{person}{Yexiao He}, \bibinfo{person}{Zheyu Shen}, \bibinfo{person}{Luyang Liu}, {and} \bibinfo{person}{Ang Li}.} \bibinfo{year}{2024}\natexlab{b}.
\newblock \showarticletitle{One Communication Round is All It Needs for Federated Fine-Tuning Foundation Models}.
\newblock \bibinfo{journal}{\emph{arXiv preprint arXiv:2412.04650}} (\bibinfo{year}{2024}).
\newblock


\bibitem[Wang et~al\mbox{.}(2025b)]%
        {wang2025KGEES}
\bibfield{author}{\bibinfo{person}{Ziqi Wang}, \bibinfo{person}{Xiaoyu Xia}, \bibinfo{person}{Ibrahim Khalil}, \bibinfo{person}{Minghui Liwang}, \bibinfo{person}{Xiaolong Xu}, \bibinfo{person}{Xun Yi}, \bibinfo{person}{Yan Li}, {and} \bibinfo{person}{Minhui Xue}.} \bibinfo{year}{2025}\natexlab{b}.
\newblock \showarticletitle{{ KGEES: An Energy Saving System with Location Privacy Preservation in Multi-Access Edge Computing }}.
\newblock \bibinfo{journal}{\emph{IEEE Transactions on Dependable and Secure Computing}} \bibinfo{number}{01} (\bibinfo{date}{Dec.} \bibinfo{year}{2025}), \bibinfo{pages}{1--14}.
\newblock
\showISSN{1941-0018}


\bibitem[Wang et~al\mbox{.}(2025a)]%
        {wang2025MoSEEC}
\bibfield{author}{\bibinfo{person}{Ziqi Wang}, \bibinfo{person}{Xiaoyu Xia}, \bibinfo{person}{Ibrahim Khalil}, \bibinfo{person}{Minghui Liwang}, {and} \bibinfo{person}{Minhui Xue}.} \bibinfo{year}{2025}\natexlab{a}.
\newblock \showarticletitle{{ MoSEEC: Sustainable and Trajectory Privacy-Preserving Edge Resource Management }}.
\newblock \bibinfo{journal}{\emph{IEEE Transactions on Mobile Computing}} \bibinfo{number}{01} (\bibinfo{date}{Nov.} \bibinfo{year}{2025}), \bibinfo{pages}{1--14}.
\newblock
\showISSN{1558-0660}


\bibitem[Wang et~al\mbox{.}(2024d)]%
        {wang2024gees}
\bibfield{author}{\bibinfo{person}{Ziqi Wang}, \bibinfo{person}{Xiaoyu Xia}, \bibinfo{person}{Minhui Xue}, \bibinfo{person}{Ibrahim Khalil}, \bibinfo{person}{Minghui Liwang}, {and} \bibinfo{person}{Xun Yi}.} \bibinfo{year}{2024}\natexlab{d}.
\newblock \showarticletitle{GEES: Enabling Location Privacy-Preserving Energy Saving in Multi-Access Edge Computing}. In \bibinfo{booktitle}{\emph{Proceedings of the ACM on Web Conference 2024}}. \bibinfo{pages}{2735--2746}.
\newblock


\bibitem[Wei et~al\mbox{.}(2020)]%
        {wei2020federated}
\bibfield{author}{\bibinfo{person}{Kang Wei}, \bibinfo{person}{Jun Li}, \bibinfo{person}{Ming Ding}, \bibinfo{person}{Chuan Ma}, \bibinfo{person}{Howard~H Yang}, \bibinfo{person}{Farhad Farokhi}, \bibinfo{person}{Shi Jin}, \bibinfo{person}{Tony~QS Quek}, {and} \bibinfo{person}{H~Vincent Poor}.} \bibinfo{year}{2020}\natexlab{}.
\newblock \showarticletitle{Federated learning with differential privacy: Algorithms and performance analysis}.
\newblock \bibinfo{journal}{\emph{IEEE transactions on information forensics and security}}  \bibinfo{volume}{15} (\bibinfo{year}{2020}), \bibinfo{pages}{3454--3469}.
\newblock


\bibitem[Wen et~al\mbox{.}(2024)]%
        {wen2024membership}
\bibfield{author}{\bibinfo{person}{Rui Wen}, \bibinfo{person}{Zheng Li}, \bibinfo{person}{Michael Backes}, {and} \bibinfo{person}{Yang Zhang}.} \bibinfo{year}{2024}\natexlab{}.
\newblock \showarticletitle{Membership inference attacks against in-context learning}. In \bibinfo{booktitle}{\emph{Proceedings of the 2024 on ACM SIGSAC Conference on Computer and Communications Security}}. \bibinfo{pages}{3481--3495}.
\newblock


\bibitem[Wu et~al\mbox{.}(2023)]%
        {wu2023bloomberggpt}
\bibfield{author}{\bibinfo{person}{Shijie Wu}, \bibinfo{person}{Ozan Irsoy}, \bibinfo{person}{Steven Lu}, \bibinfo{person}{Vadim Dabravolski}, \bibinfo{person}{Mark Dredze}, \bibinfo{person}{Sebastian Gehrmann}, \bibinfo{person}{Prabhanjan Kambadur}, \bibinfo{person}{David Rosenberg}, {and} \bibinfo{person}{Gideon Mann}.} \bibinfo{year}{2023}\natexlab{}.
\newblock \showarticletitle{Bloomberggpt: A large language model for finance}.
\newblock \bibinfo{journal}{\emph{arXiv preprint arXiv:2303.17564}} (\bibinfo{year}{2023}).
\newblock


\bibitem[Xia et~al\mbox{.}(2025)]%
        {xia2025edge}
\bibfield{author}{\bibinfo{person}{Xiaoyu Xia}, \bibinfo{person}{Ziqi Wang}, \bibinfo{person}{Ruoxi Sun}, \bibinfo{person}{Bowen Liu}, \bibinfo{person}{Ibrahim Khalil}, {and} \bibinfo{person}{Minhui Xue}.} \bibinfo{year}{2025}\natexlab{}.
\newblock \showarticletitle{Edge Unlearning is Not “on Edge”! An Adaptive Exact Unlearning System on Resource-Constrained Devices}. In \bibinfo{booktitle}{\emph{2025 IEEE Symposium on Security and Privacy (SP)}}. IEEE, \bibinfo{pages}{2546--2563}.
\newblock


\bibitem[Xu et~al\mbox{.}(2025)]%
        {xu2025dp}
\bibfield{author}{\bibinfo{person}{Honghui Xu}, \bibinfo{person}{Shiva Shrestha}, \bibinfo{person}{Wei Chen}, \bibinfo{person}{Zhiyuan Li}, {and} \bibinfo{person}{Zhipeng Cai}.} \bibinfo{year}{2025}\natexlab{}.
\newblock \showarticletitle{DP-FedLoRA: Privacy-Enhanced Federated Fine-Tuning for On-Device Large Language Models}.
\newblock \bibinfo{journal}{\emph{arXiv preprint arXiv:2509.09097}} (\bibinfo{year}{2025}).
\newblock


\bibitem[Yan et~al\mbox{.}(2024)]%
        {yan2024federa}
\bibfield{author}{\bibinfo{person}{Yuxuan Yan}, \bibinfo{person}{Qianqian Yang}, \bibinfo{person}{Shunpu Tang}, {and} \bibinfo{person}{Zhiguo Shi}.} \bibinfo{year}{2024}\natexlab{}.
\newblock \showarticletitle{Federa: Efficient fine-tuning of language models in federated learning leveraging weight decomposition}.
\newblock \bibinfo{journal}{\emph{arXiv preprint arXiv:2404.18848}} (\bibinfo{year}{2024}).
\newblock


\bibitem[Yang et~al\mbox{.}(2023)]%
        {yang2023csra}
\bibfield{author}{\bibinfo{person}{Yunchao Yang}, \bibinfo{person}{Miao Hu}, \bibinfo{person}{Yipeng Zhou}, \bibinfo{person}{Xuezheng Liu}, {and} \bibinfo{person}{Di Wu}.} \bibinfo{year}{2023}\natexlab{}.
\newblock \showarticletitle{Csra: Robust incentive mechanism design for differentially private federated learning}.
\newblock \bibinfo{journal}{\emph{IEEE Transactions on Information Forensics and Security}}  \bibinfo{volume}{19} (\bibinfo{year}{2023}), \bibinfo{pages}{892--906}.
\newblock


\bibitem[Zhang and Zhou(2024)]%
        {zhang2024leave}
\bibfield{author}{\bibinfo{person}{Anderson~Y Zhang} {and} \bibinfo{person}{Harrison~Y Zhou}.} \bibinfo{year}{2024}\natexlab{}.
\newblock \showarticletitle{Leave-one-out singular subspace perturbation analysis for spectral clustering}.
\newblock \bibinfo{journal}{\emph{The Annals of Statistics}} \bibinfo{volume}{52}, \bibinfo{number}{5} (\bibinfo{year}{2024}), \bibinfo{pages}{2004--2033}.
\newblock


\bibitem[Zhang et~al\mbox{.}(2024a)]%
        {zhang2024towards}
\bibfield{author}{\bibinfo{person}{Jianyi Zhang}, \bibinfo{person}{Saeed Vahidian}, \bibinfo{person}{Martin Kuo}, \bibinfo{person}{Chunyuan Li}, \bibinfo{person}{Ruiyi Zhang}, \bibinfo{person}{Tong Yu}, \bibinfo{person}{Guoyin Wang}, {and} \bibinfo{person}{Yiran Chen}.} \bibinfo{year}{2024}\natexlab{a}.
\newblock \showarticletitle{Towards building the federatedgpt: Federated instruction tuning}. In \bibinfo{booktitle}{\emph{ICASSP 2024-2024 IEEE International Conference on Acoustics, Speech and Signal Processing (ICASSP)}}. IEEE, \bibinfo{pages}{6915--6919}.
\newblock


\bibitem[Zhang et~al\mbox{.}(2021)]%
        {zhang2021incentive}
\bibfield{author}{\bibinfo{person}{Jingwen Zhang}, \bibinfo{person}{Yuezhou Wu}, {and} \bibinfo{person}{Rong Pan}.} \bibinfo{year}{2021}\natexlab{}.
\newblock \showarticletitle{Incentive mechanism for horizontal federated learning based on reputation and reverse auction}. In \bibinfo{booktitle}{\emph{Proceedings of the Web Conference 2021}}. \bibinfo{pages}{947--956}.
\newblock


\bibitem[Zhang et~al\mbox{.}(2024b)]%
        {zhang2024tinyllama}
\bibfield{author}{\bibinfo{person}{Peiyuan Zhang}, \bibinfo{person}{Guangtao Zeng}, \bibinfo{person}{Tianduo Wang}, {and} \bibinfo{person}{Wei Lu}.} \bibinfo{year}{2024}\natexlab{b}.
\newblock \showarticletitle{Tinyllama: An open-source small language model}.
\newblock \bibinfo{journal}{\emph{arXiv preprint arXiv:2401.02385}} (\bibinfo{year}{2024}).
\newblock


\bibitem[Zhang et~al\mbox{.}(2015)]%
        {zhang2015character}
\bibfield{author}{\bibinfo{person}{Xiang Zhang}, \bibinfo{person}{Junbo Zhao}, {and} \bibinfo{person}{Yann LeCun}.} \bibinfo{year}{2015}\natexlab{}.
\newblock \showarticletitle{Character-level convolutional networks for text classification}.
\newblock \bibinfo{journal}{\emph{Advances in neural information processing systems}}  \bibinfo{volume}{28} (\bibinfo{year}{2015}).
\newblock


\end{thebibliography}

\appendix
\nobalance
\section*{Appendix} 
\crefalias{section}{appendix}
\crefalias{subsection}{appendix}
\section{Summary of Notations}
\label{appendix:notations}
To facilitate clarity and improve readability, we summarize the key notations used throughout the paper in \Cref{tab:summary_of_notations}. 
\begin{table}[]
\renewcommand{\arraystretch}{1.2}
\footnotesize
\caption{Summary of Notations}
\label{tab:summary_of_notations}
\centering
\begin{tabular}{l|l}
\hline
Notation & Description\\
\hline
$\mathrm{A}_i^t$ & Client $c_i$'s LoRA $A$ matrices at $t$\\
$\mathcal{A}(\cdot)$ & A performance metric for global model\\
$\alpha_{dir}$ & Dirichlet concentration parameter \\
$\mathrm{B}_i^t$ & Client $c_i$'s LoRA $B$ matrices at $t$\\
$\beta$ & Decay of exponential moving average \\
$c_i \in C$ & Client $c_i$\\
$\mathcal{D}^t$ & Global datasets across all clients at $t$\\
$\mathcal{D}_i^t$ & Local dataset of $c_i$ at $t$ \\
$d_B$ & Dimension of the concatenated and vectorized LoRA $B$ \\
$\Delta W_g^t$ & Aggregated global parameter updates \\
$\Delta W_i^t$ & Parameter updates of client $c_i$ at $t$\\
$\gamma_i$ & Privacy preference of $c_i$\\
$I_k^t$ & Upper confidence bound for noise option $\sigma^{(k)}$ at $t$\\
$\mathcal{M}$& Incentive mechanism\\
$\mu^{(-i)}$ & Row-mean of $X^{(-i)}$ \\
$\hat\mu_k$ & Estimate of expected utility\\
$N$ & Number of clients \\
$P_i$ & Projector operator \\
$\mathcal{P}_i$ & Privacy term in $U_i$ \\
$\pi$ & Strategy profile of all clients $\{\pi_i\}_{i=1}^N$ in the SAMG \\
$\pi^*$ & Stationary Markov equilibrium for the SAMG\\
$\rho$ & Bias factor for estimate bias\\
$\|r_i\|_2^2$ & Residual energy norm of $c_i$ \\
$\Sigma$ & Noise action set, $\{\sigma^{(1)}, \cdots,\sigma^{(\mathbb{K})}\}$ \\
$s_{i}^t$ & Client $c_i$'s score evaluated by the server at $t$ \\
$\boldsymbol{\sigma}^t$& Joint noise profile of all clients at $t$\\
$\hat{\sigma}_i^t$ & Estimated noise scale of $c_i$ at $t$\\
$\tilde{\sigma}_i^t$ & Estimated noise scale of $c_i$ at $t$ with bias\\
${\sigma}^t_i$ & Noise scale of $c_i$ at $t$\\
$T$ & Total communication rounds \\
$t$ & Communication round \\
$U_i^t$ & Client $c_i$'s utility at $t$\\
$U_s^t$ & System utility at $t$ \\
$\overline{U}_i$ & Client $c_i$'s long-term utility in the SAMG\\
$w_i^t$ & Aggregation weight assigned to client $c_i$ at $t$\\
$\mathrm{W}$ & Pre-trained base model \\
$\mathrm{W}_g^t$ & Global model at $t$ \\
$\mathcal W$ & State space in the SAMG\\
$X^{(-i)}$ & Flattened and concatenated LoRA matrices of all clients except $c_i$ \\
\hline
\end{tabular}
\end{table}

\section{Details of Algorithm}
This section demonstrate the details of~\cref{alg:loo_pca_b} and~\cref{alg:noise_aware_allocation} mentioned in~\cref{subsec:NWA} and ~\cref{alg:client_best_response} mentioned in~\cref{subsec:INA}.
\label{appendix:alg}
\begin{algorithm}[]
\caption{LOO–PCA Residual Noise Estimation on LoRA-$B$}
\label{alg:loo_pca_b}
\SetAlgoLined
\KwIn{Client LoRA updates $\{\Delta A_i\,\Delta B_i\}_{i=1}^N$; Number of each $\Delta B_i$'s modules $\mathcal{K}$}
\KwOut{Per–client noise estimates $\hat{\sigma}_1,\dots,\hat{\sigma}_N$}

\Comment{Construct matrix} 
For each client $i=1,\ldots,N$ we collect all LoRA-B matrices and stack them after vectorization:\\ \nllabel{line:stack}
$x_i \;=\; \bigoplus_{\,b\in\mathcal{K}}\mathrm{vec}\!\big(B_i^{(b)}\big)\;\in\;\mathbb{R}^{d_B},
\qquad
d_B \;=\; \sum_{b\in\mathcal{K}} r\, d_{out}$.
Form $X \leftarrow [\,x_1,\dots,x_N\,] \in \mathbb{R}^{d_B\times N}$.

\For{$i=1$ \KwTo $N$}{
  \Comment{Form public subspace and projector operator}
  $X^{(-i)} \leftarrow X$ with column $i$ removed $\in \mathbb{R}^{d_{out}\times(N-1)}$;\nllabel{line:formX} \\ 
  $\mu^{(-i)} \leftarrow \text{row-mean}(X^{(-i)}) \in \mathbb{R}^{d_B}$; \nllabel{line:mu}\\
  \textbf{Center} $X^{(-i)} \leftarrow X^{(-i)} - \mu^{(-i)} \cdot \mathbf{1}^\top$; \nllabel{line:center}\\
  \textbf{Compute} randomized SVD: $X^{(-i)} = U_{-i}\,\Sigma_{-i}\,V_{-i}^\top$; \nllabel{line:SVD} \\
  $\mathrm{rank}~K \leftarrow \mathrm{cols}(U)-1$; \\
  $U_K \leftarrow U[:,\,1{:}K] \in \mathbb{R}^{d_B\times K}$; \\
  Projector operator $P_i \;=\; U^{(-i)}_{K}\big(U^{(-i)}_{K}\big)^\top$ \nllabel{line:Proj} \\
  \Comment{Residual energy as noise estimates}
  $r_i \leftarrow \big(I -P_i\big)\,(x_i - \mu^{(-i)})$; \nllabel{line:residual} \\
  $\hat{\sigma}_i^2 \leftarrow \|r_i\|_2^2 / (d_B - K)$;\quad $\hat{\sigma}_i \leftarrow \sqrt{\hat{\sigma}_i^2}$; \nllabel{line:estimates}
}
\Return $\{\hat{\sigma}_i\}_{i=1}^N$.
\end{algorithm}

\begin{algorithm}[]
\caption{Noise-aware Weights Allocation (NWA)}
\label{alg:noise_aware_allocation}

\KwIn{Global $W_g^t$; Clients $\{c_1,\dots,c_N\}$; client updates $\{(A_i^t,B_i^t)\}$.}
\KwOut{Aggregation weights $\{w_i^t\}_{i=1}^{N}$; global update $\Delta W_g^t$; global model $W_g^{t+1}$.}
\For{$i=1$ \KwTo $C$}{
    Estimate noise: $\hat\sigma_{i}^t \leftarrow \text{LOO-PCA}(B_i^t)$ in~\Cref{alg:loo_pca_b}\nllabel{line:loo}\\
    Map to score: $s_{i}^t \leftarrow \big(1/(\hat\sigma_{i}^t+\tau)\big)$\; \nllabel{line:score}
}
\textbf{Allocation weights to $c_i$:} $w_{i}^t \leftarrow s_{i}^t/\sum_{i=1}^{N} s_{i}^t$ \nllabel{line:weights}

\textbf{Aggregation for global $\Delta W_g^t$:}
$\Delta W_g^t \leftarrow \sum_{i=1}^{N} w_{i}^t\, B_i^tA_i^t$ \nllabel{line:aggre}

\textbf{return}  $W_g^{t+1} \leftarrow W_g^t + \Delta W_g^t$
\end{algorithm}

\begin{algorithm}[]
\caption{Individual Noise Adaptation (INA) based on UCB}
\label{alg:client_best_response}
\KwIn{Noise action set $\Sigma=\{\sigma^{(1)},\ldots,\sigma^{(\mathbb{K})}\}$; communication rounds $T$; exploration scale $\kappa>0$; EMA decay $\rho\in[0,1]$; privacy preference $\gamma_i$}
\KwOut{Noise sequence of $c_i$ over rounds $\{\sigma_i^t\}_{t=1}^T$}

\textbf{Initialize} For $k=1$ to $\mathbb K$: set $n_k\leftarrow 0$ and $\hat\mu_k\leftarrow \mu_0$

\For{$t=1$ \KwTo $T$}{
  \For{$k\gets 1$ \KwTo $\mathbb K$}{
    Compute UCB indices as~\cref{eq:UCB}\; \nllabel{line:compute}
  }
  \textbf{Select} $k \in \arg\max_{1\le k\le \mathbb{K}} I_k^t$ and set $\sigma_i^t \leftarrow \sigma^{(k)}$ \nllabel{line:select}\\
  \textbf{Upload} the locally perturbed update using $\sigma_i^t$ \nllabel{line:upload}\\
  \textbf{Observe} the new global model $W_g^{t}$ and evaluate local performance $\mathcal{G}_i^t\in[0,1]$ \nllabel{line:observe}\\
  \textbf{Compute utility} $U_i^t$ as~\cref{eq:client_utility} \nllabel{line:utility}\\
  \textbf{Update} $n_{k}\leftarrow n_{k}+1$ and $\hat\mu_{k}\leftarrow (1-\beta)\hat\mu_{k} + \beta\, U_i^t$ \nllabel{line:update}
}
\end{algorithm}

\section{Theoretical Analysis}
\label{appendix: theoretical_analysis}
We consider a \textbf{stochastic aggregative Markov game (SAMG)} with client set $\mathcal C=\{c_1,\ldots,c_N\}$ and state space $\mathcal W$. The state (the global model) $W_g\in\mathcal W$ evolves in rounds. Each client $c_i$ selects a noise level from a finite noise action set $\Sigma$ according to a stationary Markov mixed strategy $\pi_i:\mathcal W\to\mathsf{Prob}(\Sigma)$. Let $\pi=(\pi_1,\ldots,\pi_N)$ denote the strategy profile. Conditional on $W_g$, clients act independently; hence the joint action distribution on $\Sigma^N$ is $\bigotimes_{c_j\in\mathcal C}\pi_j(W_g)$, and a realized joint action is a sample $\sigma=(\sigma_1,\ldots,\sigma_N)\in\Sigma^N$ drawn from this distribution.

Each client’s local update is summarized by a bounded, continuous map $Q_i:\mathcal W\times\Sigma\to\mathcal H$, where $\mathcal H$ is a Hilbert space. The server forms an aggregate statistic $\Gamma(W_g,\sigma)=\sum_i w_i(\sigma)\,Q_i(W_g,\sigma_i)$ with inverse-noise weights
$w_i(\sigma)=\frac{1/\sigma_i+\tau}{\sum_j(1/\sigma_j+\tau)}$.
Utilities and the state kernel $\mathcal F$ depend on actions only through $\Gamma$, i.e. ,
$U_i(W_g,\sigma)=\mathcal{G}_i\!\big(W_g,\Gamma(W_g,\sigma)\big)+\gamma_i P_i(\sigma_i)$ and
$\mathcal F\big(\cdot\,\big|\,W_g,\Gamma(W_g,\sigma)\big)$.

We then define the long-term average utility of client $c_i$ as:
\begin{equation}
\overline{U}_i(\pi)\;=\;\lim_{T\to\infty}\frac{1}{T}\,\mathbb E_\pi\!\Big[\sum_{t=1}^T U_i^t\big(W_g^t,\sigma^t\big)\Big].
\end{equation}

\noindent\textbf{Equilibrium and existence.} 
An equilibrium of the SAMG is reached when no client can unilaterally switch to any alternative stationary Markov strategy $\pi_i'$ and obtain a higher long-term average utility, holding others fixed at $\pi_{-i}^*$. Formally, a stationary Markov equilibrium (SME) is defined as:
\begin{definition}[Stationary Markov equilibrium]
A stationary profile $\pi^*=(\pi_1^*,\ldots,\pi_N^*)$ is an equilibrium if, for every client $c_i$
and every admissible alternative stationary Markov strategy $\pi_i':\mathcal W\to\mathsf{Prob}(\Sigma)$,
\begin{equation}\label{eq:equilibrium}
\overline{U}_i(\pi^*)\ \ge\ \overline{U}_i\big((\pi_i',\pi_{-i}^*)\big).
\end{equation}
\end{definition}
Intuitively, the equilibrium represents a stable state where no client can increase its long-term utility by unilaterally changing its noise level. Each client has reached an optimal privacy-performance balance, so any deviation, either by adding or reducing noise, would lower its own utility. 

Assume we have the following assumptions:
\begin{assumption}[State/actions/strategies.]\label{assumption:1}
$\mathcal W$ is a Polish space, and $\mathcal B(\mathcal W)$ denote its Borel $\sigma$-algebra. The client set is $\mathcal C=\{c_1,\ldots,c_N\}$. The action set $\Sigma\subset[0,1]$ is finite. A stationary Markov mixed strategy for $c_i$ is a measurable map $\pi_i:\mathcal W\to\mathsf{Prob}(\Sigma)$. Conditional on $W_g$, each client mix independently use its own mixed strategy.
\end{assumption}

\begin{assumption}[Aggregative structure.]\label{assumption:2}
Each client’s update summary $Q_i:\mathcal W\times\Sigma\to\mathcal H$ is bounded and continuous into a fixed separable Hilbert space $\mathcal H$. Let the inverse-noise weights $w_i(\sigma)$ be bounded and continuous (e.g., $w_i(\sigma)=\frac{1/(\sigma_i+\tau)}{\sum_{j=1}^N1/(\sigma_j+\tau)}$ with $\tau>0$). Define
$\Gamma(W_g,\sigma)\ :=\ \sum_{i=1}^N w_i(\sigma)\,Q_i(W_g,\sigma_i)\ \in\ \mathcal H$.
\end{assumption}

\begin{assumption}[Utilities and bound.]\label{assumption:3}
For each $i$,\\
$U_i(W_g,\sigma)\ =\ \mathcal{G}_i\!\big(W_g,\Gamma(W_g,\sigma)\big)\ +\ \gamma_i\,\mathcal{P}_i(\sigma_i)$,where $\mathcal{G}_i$ and $\mathcal{P}_i$ are bounded and continuous. There exist $e_0,e_1<\infty$ and a Lyapunov function $V:\mathcal W\to[1,\infty)$ such that$|U_i(W_g,\sigma)|\ \le\ e_0 + e_1\,V(W_g),\forall c_i,\ (W_g,\sigma)$.
\end{assumption}

\begin{assumption}[Feller dynamics.]\label{assumption:4}
The transition kernel $\mathcal F(\cdot\mid W_g,\Gamma)$ is Feller in $(W_g,\Gamma)$; i.e., for every bounded continuous $\varphi:\mathcal W\to\mathbf{R}$,
$\int \varphi(W_g')\,\mathcal F(\mathrm dW_g'\mid W_g^n,\Gamma^n)\ \to\ \int \varphi(W_g')\,\mathcal F(\mathrm dW_g'\mid W_g,\Gamma)$, whenever $(W_g^n,\Gamma^n)\to(W_g,\Gamma)$.
\end{assumption}

\begin{assumption}[Foster–Lyapunov drift.]\label{assumption:5}
There exist $a\in(0,1)$, $b<\infty$, a petite set $C\subset\mathcal W$, and the $V$ from~\Cref{assumption:3} such that for all $W_g$ and all aggregates in the range of $\Gamma(W_g,\cdot)$,
$\int V(W_g')\,\mathcal F(\mathrm dW_g'\mid W_g,\Gamma)\ \le\ a\,V(W_g)\ +\ b\,\mathbf 1_C(W_g)$.
\end{assumption}

Under ~\Cref{assumption:1}–~\Cref{assumption:5}, the game admits at least one such equilibrium:

\begin{theorem}[Existence of stationary Markov equilibrium]\label{theorem:existence}
Assume~\Cref{assumption:1}–~\Cref{assumption:5} hold. Then there exists a stationary Markov equilibrium $\pi^*$ satisfying \cref{eq:equilibrium} for all $c_i\in\mathcal C$.
\end{theorem}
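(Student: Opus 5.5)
The plan is the standard route for existence of stationary Markov equilibria in average-reward stochastic games with a Polish state space: a fixed-point argument on the space of stationary Markov strategy profiles, with the long-run average handled through uniform geometric ergodicity. Each client then solves an average-cost optimality equation (ACOE) against the others' fixed strategies.

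\textbf{Step 1: ergodicity.} I would first use \Cref{assumption:5} together with the petite set $C$ to invoke the Meyn--Tweedie $V$-uniform ergodic theorem. The drift bound holds for every aggregate in the range of $\Gamma$, so it holds uniformly over all strategy profiles. Hence for every stationary profile $\pi$ the state chain under the induced kernel $\mathcal F_\pi(\cdot\mid W_g)=\sum_{\sigma}\bigotimes_j\pi_j(W_g)(\sigma)\,\mathcal F(\cdot\mid W_g,\Gamma(W_g,\sigma))$ has a unique invariant measure $\nu_\pi$. It converges to $\nu_\pi$ geometrically in $V$-norm, with constants independent of $\pi$. Combined with the bound $|U_i|\le e_0+e_1V$ from \Cref{assumption:3}, the limit defining $\overline U_i(\pi)$ then exists, equals $\int \bar U_i^\pi\,d\nu_\pi$, and does not depend on the initial state. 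Here $\bar U_i^\pi(W_g)$ denotes the expected one-step utility under $\pi$.

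\textbf{Step 2: best responses via the ACOE.} Fix $\pi_{-i}$. The client-$i$ problem is then a single-agent average-reward MDP with finite action set $\Sigma$. Its one-step reward is $r_i(W_g,a)=\mathbb E_{\sigma_{-i}\sim\pi_{-i}(W_g)}[U_i(W_g,(a,\sigma_{-i}))]$, and its kernel is defined the same way. Uniform geometric ergodicity yields a solution $(\rho_i,h_i)$ of the ACOE with $|h_i|\le cV$, obtained for instance via the vanishing-discount method or the span contraction in the weighted $V$-norm. The set $BR_i(\pi_{-i})(W_g)$ of optimal mixed actions is the set of measures supported on $\arg\max_a\{r_i(W_g,a)+\int h_i\,d\mathcal F_a\}$. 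This set is nonempty, convex and compact. A measurable selector exists because $\Sigma$ is finite.

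\textbf{Step 3: fixed point.} I would give the strategy space the weak-star topology of Young measures, i.e.\ measurable maps $\mathcal W\to\mathsf{Prob}(\Sigma)$ paired against $L^1(\lambda)$ for a suitable reference measure $\lambda$. With this topology the strategy space is convex, compact and metrizable; separability of $\mathcal W$ gives metrizability. The correspondence $\pi\mapsto\prod_i BR_i(\pi_{-i})$, taken $\lambda$-a.e., has convex values. Applying Kakutani--Fan--Glicksberg then gives a fixed point $\pi^*$. Each $\pi_i^*$ attains $\rho_i=\sup_{\pi_i'}\overline U_i(\pi_i',\pi_{-i}^*)$, which is exactly \cref{eq:equilibrium}.

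\textbf{Main obstacle.} The hard step is the closed-graph property of $BR$ under weak-star convergence of $\pi^n\to\pi$. This requires showing that $r_i^n\to r_i$ and that the transition integrals $\int h^n\,d\mathcal F^n$ converge. Convergence of the rewards uses the continuity of $w_i$, $Q_i$ and $\mathcal G_i$ from \Cref{assumption:2} and \Cref{assumption:3}. Convergence of the transition integrals uses the Feller property in \Cref{assumption:4}. The difficulty is that Feller continuity only handles strong convergence in $W_g$, while weak-star convergence of strategies is only $\lambda$-a.e.\ in a weak sense. I would try to resolve this by taking $\lambda$ to dominate all kernels, in the style of Nowak--Raghavan; if that absolute continuity is not available, I would instead use an approximation by countable-state truncations and pass to the limit. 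Stability of $(\rho_i,h_i)$ would come from the uniform $V$-bounds in Step 1, which give equicontinuity of the relative value functions.
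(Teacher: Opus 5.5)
\textbf{Verdict.} Your route differs from the paper's. You solve each client's MDP through the ACOE and apply Kakutani--Fan--Glicksberg to Young-measure strategies. The paper instead uses occupation measures $\eta^{(\pi_i,\pi_{-i})}$, Berge's theorem, and Tychonoff in the product of pointwise-weak topologies. Your Steps 1--2 are standard, provided the petite set and minorization are uniform over profiles. The closed-graph step you flag, however, is a genuine gap, and it is not a technicality.

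\textbf{The gap.} In the weak-star Young-measure topology, $\pi_j^n\to\pi_j$ for every $j$ does not imply $\bigotimes_j\pi_j^n(W_g)\to\bigotimes_j\pi_j(W_g)$. Take $\mathcal W=[0,1]$ with Lebesgue $\lambda$ and two actions $a,b\in\Sigma$. Let $c_1$ and $c_2$ both play $\delta_a$ on the even dyadic intervals of length $2^{-n}$ and $\delta_b$ on the odd ones. Each strategy converges to the constant $\tfrac12\delta_a+\tfrac12\delta_b$. The joint law, however, converges to the correlated law $\tfrac12\delta_{(a,a)}+\tfrac12\delta_{(b,b)}$, so $\mathcal F^{\pi^n}$ converges weak-star to that law's kernel, not to $\mathcal F^{\pi}$. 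This product also enters $r_i$ once $N\ge3$. Consequently $\nu_{\pi^n}$, $h_i^n$ and $\rho_i^n$ need not converge to the quantities induced by $\pi$. Even for $N=2$, the condition ``$\pi_i^n(W_g)$ is supported on $\arg\max_a Q_i^n(W_g,a)$'' involves two pieces that do not pass to the limit:
\begin{itemize}
\item the pairing $\langle\pi_i^n(W_g),Q_i^n(W_g,\cdot)\rangle$, a product of two merely weak-star convergent sequences;
\item $\max_a Q_i^n(W_g,a)$, a nonlinear function of a weak-star convergent sequence.
\end{itemize}
So $\prod_i BR_i$ has no closed graph in your topology. A second problem: a best response defined $\lambda$-a.e.\ certifies optimality only if $\nu_{\pi^*}\ll\lambda$, which Assumptions 1--5 do not give.

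\textbf{Why your fallbacks fail.} The Nowak--Raghavan densities only turn weak-star convergence of a \emph{continuation} function into pointwise convergence of $\int h\,d\mathcal F(\cdot\mid W_g,\cdot)$. They do nothing for the current-state product of mixed actions. That is why the method delivers stationary \emph{correlated} equilibria, after convexifying with public randomization. Absolute continuity is also not among the assumptions. Equilibria of countable-state truncations converge only in the same Young-measure sense, so their limits are again correlated in general. Indeed, discounted games with a continuum of states, finitely many players and actions, and transitions absolutely continuous with respect to a fixed measure can have no stationary Markov equilibrium (Levy; Levy--McLennan). You therefore need extra structure or a weaker conclusion, for example:
\begin{itemize}
\item transitions that are finite mixtures of fixed measures with action-dependent weights (Nowak);
\item He--Sun's decomposable coarser kernels;
\item settling for a correlated equilibrium.
\end{itemize}

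\textbf{How the paper fares.} The paper's proof does not supply this ingredient either; it meets the same obstruction from the other side. In its pointwise-weak topology, $\pi\mapsto\bigotimes_j\pi_j(W_g)$ is continuous. But for uncountable $\mathcal W$ the measurable maps $\mathcal W\to\mathsf{Prob}(\Sigma)$ are dense and not closed in $\mathsf{Prob}(\Sigma)^{\mathcal W}$, so Tychonoff does not yield a compact strategy space. The upper hemicontinuity of $\pi_{-i}\mapsto\mathsf M_i(\pi_{-i})$ is also asserted rather than proved.
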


\begin{proof}
\textbf{Step 1: Profile-induced dynamics and long-run average.}
This step turns the dynamic game under a fixed stationary profile into a stationary Markov chain and expresses average payoffs as an integral.

For any stationary profile $\pi$, define the induced kernel on states:
\begin{equation}\label{eq:induced}
\mathcal F^\pi(\mathrm dW_g'\mid W_g)\ :=\ \int_{\Sigma^N} \mathcal F\!\big(\mathrm dW_g'\mid W_g,\Gamma(W_g,\sigma)\big)\,
\Big(\bigotimes_{j\in\mathcal \mathcal{S}}\pi_j(W_g)\Big)(\mathrm d\sigma).
\end{equation}
By~\Cref{assumption:2}–~\Cref{assumption:4} the map $(W_g,\pi)\mapsto \mathcal{F}^\pi(\cdot\mid W_g)$ is well-defined and Feller.
By the Foster–Lyapunov drift in~\Cref{assumption:5}, the Markov chain with kernel $\mathcal F^\pi$ is positive Harris recurrent, hence admits at least one invariant probability measure $\mu^\pi$ on $(\mathcal W,\mathcal B(\mathcal W))$.
Define the occupation measure on $\mathcal W\times\Sigma^N$ by:
\begin{equation}\label{eq:occ}
\eta^\pi(\mathrm dW_g,\mathrm d\sigma)\ :=\ \mu^\pi(\mathrm dW_g)\,\Big(\bigotimes_{j\in\mathcal C}\pi_j(W_g)\Big)(\mathrm d\sigma).
\end{equation}
With the bound in~\Cref{assumption:3}, the long-term average utility equals the linear functional:
\begin{equation}\label{eq:avg}
\overline U_i(\pi)=\int_{\mathcal W\times\Sigma^N} U_i(W_g,\sigma)\,\eta^\pi(\mathrm dW_g,\mathrm d\sigma).
\end{equation}

\medskip
\textbf{Step 2: Feasible occupation measures and best responses.}
This step describes all long-term behaviors client $c_i$ can induce with stationary Markov strategies and shows the average-reward best response is well behaved. 
For any stationary profile $(\pi_i,\pi_{-i})$, define the induced kernel:
\begin{equation}
\begin{split}
\mathcal F^{(\pi_i,\pi_{-i})}(\mathrm dW_g'\mid W_g)
:= & \int_{\Sigma^N} \mathcal F\!\big(\mathrm dW_g'\mid W_g,\Gamma(W_g,\sigma)\big)\,
\\
& \Big(\bigotimes_{j\in\mathcal C}\pi_j(W_g)\Big)(\mathrm d\sigma).
\end{split}
\end{equation}

By the uniform Foster--Lyapunov drift in~\Cref{assumption:5}, for all $W_g$ we have:\\
\begin{align*}
& \int V(W_g')\,\mathcal F^{(\pi_i,\pi_{-i})}(\mathrm dW_g'\mid W_g)\ \\
&= \int_{\Sigma^N}\!\Big[\int V(W_g')\,\mathcal F(\mathrm dW_g'\mid W_g,\Gamma(W_g,\sigma))\Big]\,
(\otimes_{j}\pi_j(W_g))(\mathrm d\sigma)\\
&\le \int_{\Sigma^N}\!\big[a\,V(W_g)+b\,\mathbf 1_C(W_g)\big]\,
(\otimes_{j}\pi_j(W_g))(\mathrm d\sigma)\\
&= a\,V(W_g)+b\,\mathbf 1_C(W_g).
\end{align*}

Hence $\mathcal F^{(\pi_i,\pi_{-i})}$ satisfies the same drift with $(V,a,b,\mathcal{S})$.
By the Foster--Lyapunov theorem, the chain with kernel
$\mathcal F^{(\pi_i,\pi_{-i})}$ is positive Harris recurrent and therefore admits at least one invariant
probability measure on $(\mathcal W,\mathcal B(\mathcal W))$; denote any such invariant law by
$\mu^{(\pi_i,\pi_{-i})}$.

Then, the associated occupation measure on $\mathcal W\times\Sigma^N$ is:
\begin{equation}\label{eq:occ-def}
\eta^{(\pi_i,\pi_{-i})}(\mathrm dW_g,\mathrm d\sigma)
\;:=\;
\mu^{(\pi_i,\pi_{-i})}(\mathrm dW_g)\,
\Big(\bigotimes_{j\in\mathcal C}\pi_j(W_g)\Big)(\mathrm d\sigma).
\end{equation}

Fix $c_i$ and others $\pi_{-i}$. Consider the feasible set of occupation measures:
\begin{equation}
\mathsf M_i(\pi_{-i})
\;:=\;
\big\{\eta^{(\pi_i,\pi_{-i})}\ :\ \pi_i:\mathcal W\to\mathsf{Prob}(\Sigma)\ \big\}.
\end{equation}

With ~\Cref{assumption:1}–~\Cref{assumption:5}, we have: $\mathsf M_i(\pi_{-i})$ is nonempty (existence of $\mu^{(\pi_i,\pi_{-i})}$), convex (independent mixing makes $\eta$ affine in $\pi_i$), and compact in the weak topology. Moreover, the correspondence $\pi_{-i}\mapsto\mathsf M_i(\pi_{-i})$ is upper hemicontinuous. 

Since $\eta\mapsto\int U_i\,\mathrm d\eta$ is linear continuous, Berge’s maximum theorem implies the average-reward best-response correspondence:
\begin{equation}
\mathrm{BR}_i(\pi_{-i})\ :=\ \arg\max_{\pi_i}\ \overline U_i\big((\pi_i,\pi_{-i})\big)
\ =\ \arg\max_{\eta\in\mathsf M_i(\pi_{-i})}\ \int U_i\,\mathrm d\eta
\end{equation}
is nonempty, convex-valued, and upper hemicontinuous.

\medskip
\textbf{Step 3: Kakutani fixed point.}
This step models clients’ best responses and applies Kakutani to obtain a stationary Markov equilibrium.

Let the strategy space be $\mathcal X:=\prod_{i=1}^N\{\pi_i:\mathcal W\to\mathsf{Prob}(\Sigma)\}$ endowed with the product of pointwise-weak topologies. By Tychonoff, $\mathcal X$ is compact and convex.
Define the product best-response correspondence $\mathrm{BR}(\pi):=\prod_{i=1}^N \mathrm{BR}_i(\pi_{-i})$. By Step~2, $\mathrm{BR}$ has nonempty convex values and a closed graph (hence is upper hemicontinuous).
By Kakutani’s fixed-point theorem, there exists $\pi^*\in\mathcal X$ with $\pi^*\in\mathrm{BR}(\pi^*)$. 
By definition, for every client $c_i$ and every stationary Markov alternative $\pi_i':\mathcal W\to\mathsf{Prob}(\Sigma)$,
\begin{equation}
\overline{U}_i(\pi^*)\ \ge\ \overline{U}_i\big((\pi_i',\pi_{-i}^*)\big).
\end{equation}
which is precisely~\cref{eq:equilibrium}.
Therefore, $\pi^*$ is a stationary Markov equilibrium.
\end{proof}
\balance
\section{Experimental Details}
\label{appendix:models_detail}
This section provides a detailed description of the large language models and datasets used in~\Cref{sec:exp_results}.


\noindent\textbf{Models.}
\begin{itemize}[leftmargin=*,noitemsep]
    \item \textbf{TinyLlama}~\cite{zhang2024tinyllama}:
    a decoder-only transformer with 1.1B parameters. The model pre-trained on 3 trillion tokens.
    \item \textbf{GPT2-Large}~\cite{radford2019language}:
    a transformer-based language model created and released by OpenAI with 774M-parameter. The model is a pretrained model on English language using a causal language modeling.
\end{itemize}

\noindent\textbf{Datasets.}
\begin{itemize}[leftmargin=*,noitemsep]
    \item \textbf{AGNews}~\cite{zhang2015character}:
    4-class news topic classification (World, Sports, Business, Sci/Tech). Each example contains a title and a brief description describing a recent event/story across general-news topics.
    \item \textbf{DBpedia}~\cite{zhang2015character}:
    14-class ontology classification constructed from DBpedia. Each example provides an entity title and its abstract, labeled by an ontology category (e.g., company, film, place), written in factual third-person prose.
    \item \textbf{20Newsgroups}~\cite{lang1995newsweeder}:
    20-class topic classification on Usenet posts. Each example is a user-authored message, featuring informal style, quotes, headers/footers, and topic drift typical of discussion forums.
\end{itemize}

\noindent\textbf{Benchmarks}
\label{appendix:benchmarks}
\textbf{FedIT}~\cite{zhang2024towards} aggregates client LoRA updates with FedAvg by averaging $A$ and $B$ separately. \textbf{Flora}~\cite{wang2024flora} aggregates by stacking $A$ and $B$, which supports heterogeneous LoRA ranks and mitigates averaging bias. We also include \textbf{FedIT+INA} and \textbf{Flora+INA}, which add individual noise adaptation (INA) to FedIT and Flora, respectively. \textbf{FedMT+INA}~\cite{mao2024game} rewards clients who reduce noise through monetary payments transferred from those who increase noise. Because direct payments are impractical here, we substitute payments with aggregation weights to enable a fair comparison.

\noindent\textbf{Implementation Details.}
\label{appendix:settings}
LoRA uses rank $r=16$, scaling factor $\alpha=32$, dropout $=0.1$, and learning rate $=2\times 10^{-4}$~\cite{lotfi2024unlocking,wang2024flora}. For TinyLlama, LoRA adapters are inserted into \texttt{q\_proj} and \texttt{v\_proj}; for GPT2-Large, adapters are injected into \texttt{c\_attn}.
All the experiments are executed in parallel on two servers, each with 4~$\times$ NVIDIA
Tesla V100 GPUs (32GB). 

\end{document}